\theoremstyle{plain}
\newtheorem{theorem}{Theorem}[section]
\newtheorem{proposition}[theorem]{Proposition}
\theoremstyle{definition}
\newtheorem{definition}[theorem]{Definition}
\theoremstyle{remark}
\definecolor{em}{gray}{0.9}
\newcommand{\cem}{\cellcolor{em}}
\newcommand{\bs}[1]{{\textbf{#1}}}
\renewcommand{\paragraph}[1]{\vspace{0.3em}\noindent\textbf{#1}\hspace{0.5em}}
\newcommand{\name}{Graph4MM}
\definecolor{first}{RGB}{178,24,43}
\definecolor{second}{RGB}{204, 153, 0}
\definecolor{third}{RGB}{155,155,155}
\icmltitlerunning{\name: Weaving Multimodal Learning with Structural Information}
\begin{document}
\twocolumn[
\icmltitle{Graph4MM: Weaving Multimodal Learning with Structural Information}



\icmlsetsymbol{equal}{*}

\begin{icmlauthorlist}
\icmlauthor{Xuying Ning}{equal,yyy}
\icmlauthor{Dongqi Fu}{equal,comp}
\icmlauthor{Tianxin Wei}{yyy}
\icmlauthor{Wujiang Xu}{rrr}
\icmlauthor{Jingrui He}{yyy}
\end{icmlauthorlist}

\icmlaffiliation{yyy}{University of Illinois Urbana-Champaign}
\icmlaffiliation{comp}{Meta AI}
\icmlaffiliation{rrr}{Rutgers University}

\icmlcorrespondingauthor{Jingrui He}{jingrui@illinois.edu}

\icmlkeywords{Machine Learning, ICML}

\vskip 0.3in
]



\printAffiliationsAndNotice{\icmlEqualContribution} 

\begin{abstract}

Real-world multimodal data usually exhibit complex structural relationships beyond traditional one-to-one mappings like image-caption pairs. Entities across modalities interact in intricate ways, with images and text forming diverse interconnections through contextual dependencies and co-references. Graphs provide powerful structural information for modeling intra-modal and inter-modal relationships. However, previous works fail to distinguish multi-hop neighbors and treat the graph as a standalone modality, which fragments the overall understanding. This limitation presents two key challenges in multimodal learning: (1) integrating structural information from multi-hop neighbors into foundational models, and (2) fusing modality-specific information in a principled manner. To address these challenges, we revisit the role of graphs in multimodal learning within the era of foundation models and propose \name, a graph-based multimodal learning framework. To be specific, we introduce Hop-Diffused Attention, which integrates multi-hop structural information into self-attention through causal masking and hop diffusion. Furthermore, we design MM-QFormer, a multi-mapping querying transformer for cross-modal fusion. Through theoretical and empirical analysis, we show that \textit{leveraging structures to integrate both intra- and inter-modal interactions improves multimodal understanding beyond treating them as a standalone modality}. Experiments on both generative and discriminative tasks show that \name~outperforms larger VLMs, LLMs, and multimodal graph baselines, achieving a 6.93\% average improvement.

\end{abstract}

\section{Introduction}
\begin{figure}[tb!]
\label{fig:multi_modality}
    \centering
\includegraphics[width=0.45\textwidth]{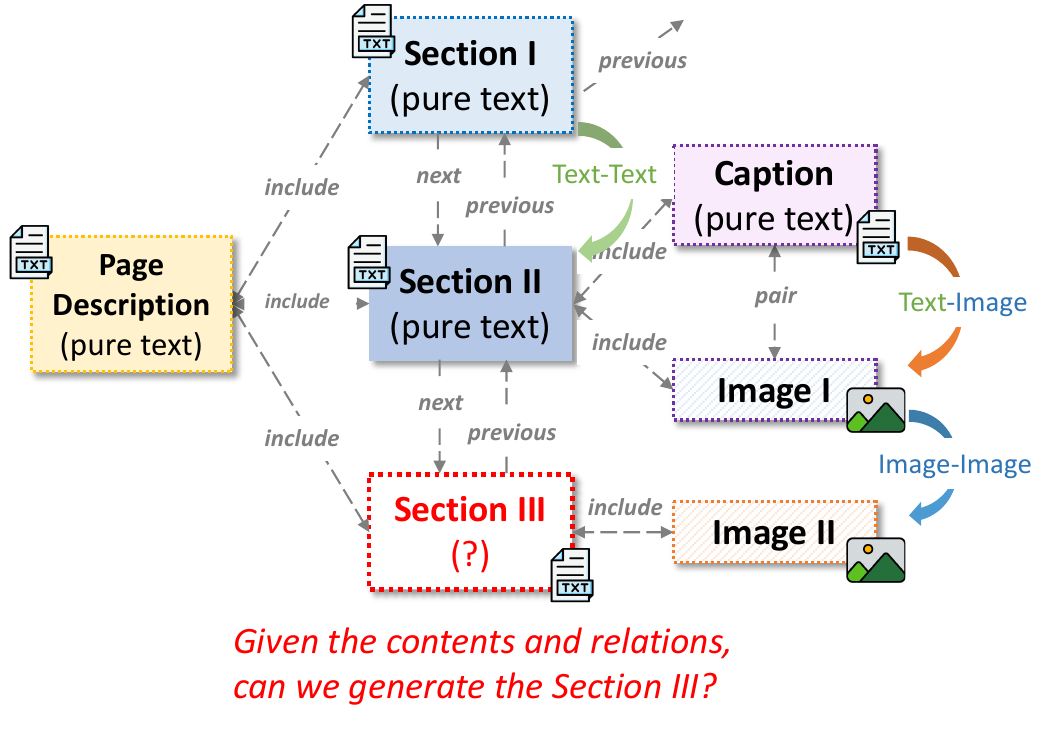} 
    \vspace{-4mm}
    \caption{Multimodal relationships in a document, where sections, images, captions, and page descriptions form a structured graph. The task aims to generate Section III using the neighboring context.}
    \vspace{-5mm}
\end{figure}


Data modalities, in general, refer to different types of the data, like text, image, and audio~\cite{DBLP:conf/icml/NgiamKKNLN11}.
Jointly considering cross-modality data has great research and application potential, and it has been widely applied in image generation, question answering, and many more~\cite{gao2020survey, summaira2021recent, stahlschmidt2022multimodal, DBLP:journals/tomccap/JabeenLSOLJ23}.
Recently, a pioneering study~\cite{yoon2023multimodal} discovered that the relationship of data modalities is complex and far beyond the one-to-one modeling, like the image-caption pair, in real-world scenarios. For example, in an academic paper, the image and text data have different semantic and non-linear correlations, i.e., the image and its caption have a direct pairing relation to each other, but the relation between the image and its following section contents and the page summarization is intricate, as shown in Figure~\ref{fig:multi_modality}.
However, most Vision-Language Models (VLMs)~\cite{DBLP:conf/nips/AlayracDLMBHLMM22, DBLP:conf/icml/0008LSH23, DBLP:conf/nips/KohFS23, DBLP:journals/pami/ZhangHJL24} are still limited to modeling one-to-one relationships between images and text, making them inadequate for capturing complex multimodal interactions.

Graphs, as a relational data structure, have the inherent advantage of modeling complex modality relationships together. To the best of our knowledge, MMGL~\cite{yoon2023multimodal} is the state-of-the-art work that models modalities into graphs and obtains promising performance in the generation task, as compared to single-modal pre-trained language models and vision-language models. In particular, it models each modality data item (e.g., a sentence piece, an image) as a node and establishes their edges based on the concrete application scenario. For example, when addressing section summarization, edges are established based on section-subsection hierarchy and co-occurrence of images and text within the same section.

Establishing the modality graph can help select and fuse the important knowledge from the raw multimodal data and further contribute to solving the context length limitation for language models.
Nevertheless, the modality fusion in the pioneering MMGL~\cite{yoon2023multimodal} is simple and does not fully exploit the complex relationship.
To be more specific, although cross-modal data is modeled into multimodal graphs, MMGL simply concatenates the neighbors's multi-modal data together, where the internal adjacency is largely ignored, e.g., different distant nodes are treated equally.
Moreover, MMGL regards the established graph as a standalone modality with images and text. However, this approach for injecting graph topology information does not yield the expected performance improvements.

Motivated by the above observation, we further explore \textit{the role of graphs in multimodal learning in the era of foundation models} from empirical and theoretical analysis in Section~\ref{sec:discussion}.
Briefly, unlike conventional modalities such as text and images with densely pretrained representation, treating graph structures as an independent modality and projecting graph embeddings into the same space as language and vision models often result in suboptimal performance. This is primarily due to the volume of training data and the well-aligned feature spaces of pre-trained language and vision foundation models, which can hinder effective fusion and limit the model’s ability to do downstream tasks.


Therefore, we propose \textbf{\name}, a structured multimodal learning framework that simultaneously captures intra-modal multi-hop structural connectivity and fuses inter-modal representations in a principled manner. Our contributions are summarized as follows:

$\bullet$ \textbf{A Novel Structure-Guided Paradigm for Multimodal Learning.} To address the limitations of existing multimodal learning methods in capturing complex modality interactions, we propose Graph4MM, a multimodal learning framework that integrates structural information from multi-hop neighbors into foundation models and fuses modality-specific representations in a principled manner.
    
$\bullet$ \textbf{Designs of Hop-Diffused MM-QFormer.} We introduce Hop-Diffused Attention, which incorporates multi-hop connectivity into self-attention using causal masking and hop diffusion. Theoretical analysis shows that it avoids over-smoothing and does not rely on stacking multiple GNN layers for multi-hop aggregation. Additionally, we design MM-QFormer, a querying transformer, to facilitate cross-modal fusion.

$\bullet$ \textbf{Revisiting the Role of Graphs in Multimodal Learning.}  We conduct both theoretical and empirical analysis on the role of graphs in multimodal learning within the era of foundation models. Our findings suggest that leveraging topological structures to guide intra- and inter-modal interactions is more effective than treating graphs as an independent modality.

$\bullet$ \textbf{State-of-the-Art Performance Across Generation and Discrimination.}  We evaluate Graph4MM on generative (e.g., academic paper section summarization) and discriminative (e.g., zero-shot fashion classification) tasks. Extensive experiments show that it consistently outperforms pretrained VLMs, LLMs, and multimodal graph learning methods, achieving an average improvement of 6.93\% across textual and visual metrics.

\section{Preliminary}
In this section, to pave the way for introducing our multimodal framework \name, we first introduce the multimodal graph modeling and the corresponding advantages. Then, we introduce two formal tasks the proposed \name\ aims to solve.

\subsection{Multimodal Graph Modeling}
Existing works~\cite{yoon2023multimodal, jin2024instructg2i, zhu2024multimodal} offer varying definitions of multimodal graphs. To unify these perspectives, we define a multimodal graph as an unweighted, undirected structure:
\(\mathcal{G} = (\mathcal{V}, \mathcal{E}, \mathcal{T}, \mathcal{P}).\)

\begin{figure}[h]
\label{fig:mm_graph}
    \centering
    \includegraphics[width=0.4\textwidth]{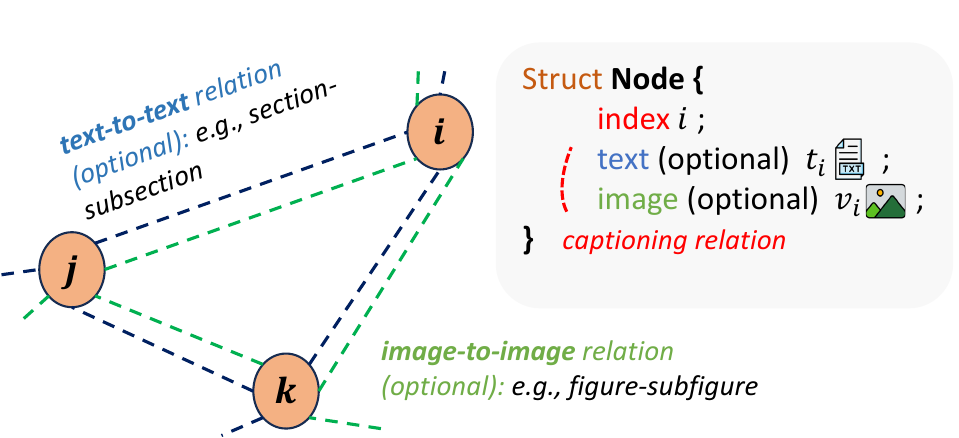}
    \vspace{-2mm}
    \caption{Multimodal Graph Modeling}
    \vspace{-2mm}
\end{figure}

\textbf{Node}. In the multimodal graph \( \mathcal{G} = (\mathcal{V}, \mathcal{E}, \mathcal{T}, \mathcal{P}) \), we model a node with a unique index and optional textual and visual attributes.
Mathematically, each node \( v_i \in \mathcal{V} \) (e.g., a section in a webpage) is represented by a unique node index \( i \) and may optionally include textual attribute \( t_{v_i} \in \mathcal{T} \) (e.g., its textual content) and visual attribute \( p_{v_i} \in \mathcal{P} \) (e.g., an associated image).

\textbf{Edge}.
In the multimodal graph \( \mathcal{G} = (\mathcal{V}, \mathcal{E}, \mathcal{T}, \mathcal{P}) \), we consider three kinds of edges, i.e., \textit{text-to-text}, \textit{image-to-image}, and \textit{text-to-image}. Based on our node modeling, the text-to-image relationship, like captioning, is basic and already encoded in each node.
Further, the text-to-text edge exists between nodes $i$ and $j$ if and only if both nodes contain non-null text features and share a meaningful predefined real-world relationship (e.g., section-subsection in a paper, 
descriptions of frequently co-purchased items). The same rule also applies to image-to-image edges between two nodes.

Through this multimodal graph representation, for a node $i$, along text-to-text (or image-to-image) edges, we can easily induce 
its \textit{textual subgraph} \( \mathcal{G}_t = (\mathcal{V}_t, \mathcal{E}_t, \mathcal{T}_t, \mathcal{P}_t) \) as
\(
\mathcal{V}_t = \{ u \in \mathcal{V} \mid \text{dist}(u, v_i) \leq \tau \},
\)
where \( \mathcal{V}_t \) includes nodes with textual attributes reachable within \(\tau\)-hops, and \( \mathcal{E}_t \) captures text-text relationships. Similarly, the \textit{visual subgraph} \( \mathcal{G}_p = (\mathcal{V}_p, \mathcal{E}_p, \mathcal{T}_p, \mathcal{P}_p) \) models image-to-image interactions. These subgraphs, combined with text-image associations at the node level, act as adjacency heuristics that guide causal masking attention, facilitating both intra-modality coherence and inter-modality fusion.


\subsection{Task Definitions}
By modeling multiple modalities into graphs, we then mathematically introduce the tasks \name\ aims to solve.

\textbf{Generation Task}. Given a node \( v_i \in \mathcal{V} \), an instruction prompt \( \mathcal{I} \), and its associated multimodal subgraphs \( \mathcal{G}_t \) and \( \mathcal{G}_p \), our objective is to generate a response \(\mathbf{\hat{y}}_{v_i} = \{\hat{y}_1, \hat{y}_2, \ldots, \hat{y}_L\}\) by maximizing:
\begin{equation}
\small
    \mathbf{\hat{y}}_{v_i} = \arg\max_{\mathbf{{y}}_{v_i}} \prod_{t=1}^L P(y_t \mid \hat{y}_1, \hat{y}_2, \ldots, \hat{y}_{t-1}, \mathcal{G}_t, \mathcal{G}_p, \mathcal{I}),
\end{equation}
where \(\mathcal{G}_t\) and \(\mathcal{G}_p\) represent the textual and visual subgraphs. The generated response leverages multimodal interactions to address the task defined by \(\mathcal{I}\). For example, in document understanding, given the instruction ``Summarize this section'', \name\ integrates broader page-level multimodal context to generate a coherent and context-aware summary.

\textbf{Discriminative Task}.
Here, we target the more challenging zero-shot classification task, where an LLM is called to generate a language response conditioned on the textual subgraph \(\mathcal{G}_t\), the visual subgraph \(\mathcal{G}_p\), and a classification prompt \(\mathcal{I}\). For a given node \(v_i \in \mathcal{V}\), LLM generates a response \(\hat{\mathbf{y}}_{v_i}\) for the classification answer and assigns the node to the most relevant class by solving:
\begin{equation}
    \text{c}(v_i) = \arg\max_{\text{c}_j} \phi(\hat{\mathbf{y}}_{v_i}, \mathbf{d}_{\text{c}_j}),
\end{equation}
where \(\hat{\mathbf{y}}_{v_i} = \text{PLM}(\mathcal{G}_t, \mathcal{G}_p, \mathcal{I})\) is the generated response from the pretrained language model, and \(\phi(\cdot)\) denotes a similarity function. Here, \(\mathbf{d}_{\text{c}_j}\) represents the embedding of the description of class \(j\). \name\ utilizes both node and neighbor attributes for enhanced zero-shot classification, which is generally unavailable in traditional graph learning.

\begin{figure*}[ht]
    \centering
    \includegraphics[width=\textwidth]{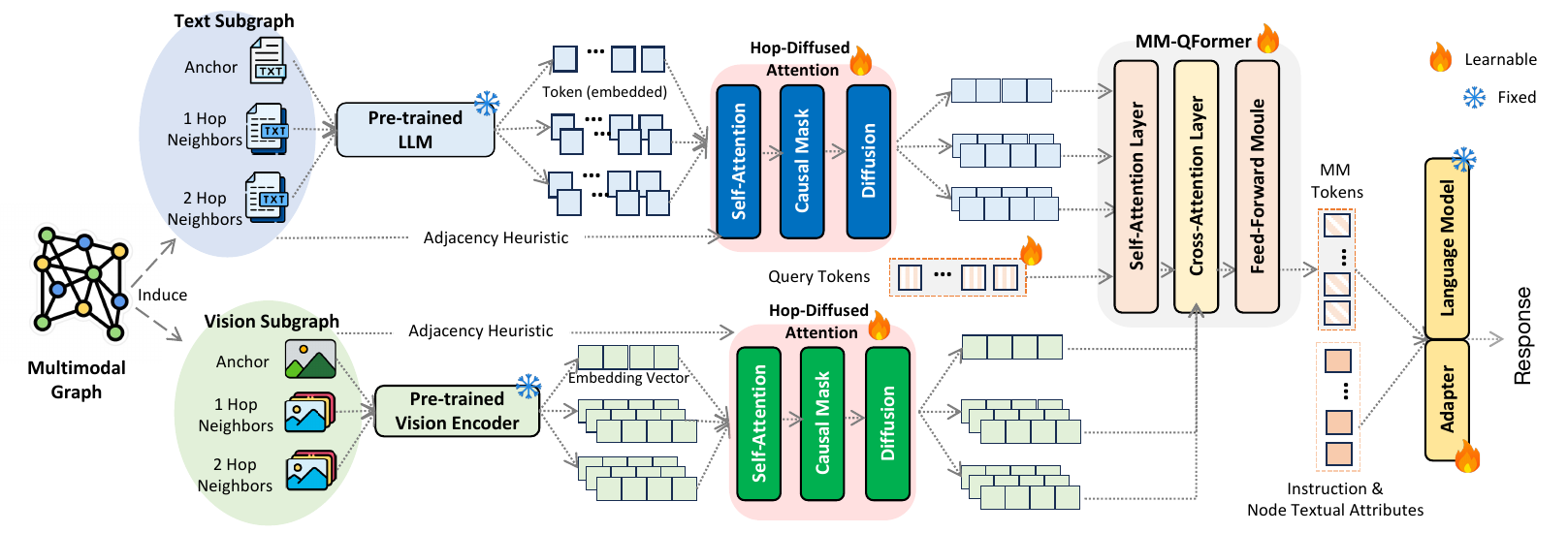}
    \vspace{-6mm}
    \caption{Overview of the \name \ framework. Hop-Diffused Attention applies adjacency heuristics via causal masking and hop diffusion, while MM-QFormer captures fine-grained text-image interactions before passing inputs to the LLM.}
    \label{fig:Fomo}
    \vspace{-4mm}
\end{figure*}

\section{Proposed \name\ Framework}
\label{sec:method}
In this section, we systematically present our \name\ framework, which synergizes the heterogeneous inputs from vision and language while simultaneously capturing their complex structural interactions. The overview of our framework is presented in Figure \ref{fig:Fomo}. To be specific, in Subection~\ref{subsec:MM_Frame}, we first introduce an elementary backbone for fusing multimodal information, which serves as a vanilla architecture to inspire the more effective \name.
Then, in Subsection~\ref{subsec:topology}, we dive into each modality and utilize multi-hop neighbor connectivity in the graph to guide the intra-modality information fusion via the proposed Hop-Diffused Attention.
Finally, in Subsection~\ref{sec:qformer}, we introduce multimodal query transformer, MM-QFormer, which captures inter-modal interaction and fuses modal-specific information for downstream foundation models to deal with generation and discrimination tasks.

\subsection{The Vanilla Architecture for Fusing Multimodal Information}
\label{subsec:MM_Frame}
The emergent reasoning capabilities \cite{wei2022emergent} of large language models are essential for tackling zero-shot open-ended QA tasks~\cite{Guo_2023_CVPR}. To effectively preserve these abilities of pretrained language models while efficiently integrating relevant visual information, our multimodal fusion strategy focuses on projecting visual inputs seamlessly into the LLM's semantic space, leveraging off-the-shelf frozen pretrained language and vision models.

\paragraph{Textual Context Integration.}
We define the textual context \( c_T(\mathcal{G}_t) \) derived from the subgraph \(\mathcal{G}_t = (\mathcal{V}_t, \mathcal{E}_t, \mathcal{T}_t, \mathcal{P}_t)\), where \(\mathcal{V}_t\) consists of the target node \(v_i\) and its \(\tau\)-hop neighbors \(\mathcal{N}(v_i) = \{u \in \mathcal{V} \mid \text{dist}(u, v_i) \leq \tau\}\). The textual input sequence is constructed as:
\begin{equation}
    c_T(\mathcal{G}_t) = \big[ h_T(I), \{h_T(t_v) \mid v \in \mathcal{V}_t\} \big] \in \mathbb{R}^{l_T \times d},
\end{equation}
where \(I\) represents instruction prompts, \(h_T(\cdot)\) maps text inputs to their corresponding token embeddings, \(l_T\) is the sequence length of the complete textual input, \(d\) is the dimensionality of the token embeddings, and $[ \cdot ]$ represents the concatenation operation.

\paragraph{Visual Context Integration.}
The visual context \( c_P(\mathcal{G}_p) \) is derived from the visual attributes of nodes in \(\mathcal{G}_p = (\mathcal{V}_p, \mathcal{E}_p, \mathcal{T}_p, \mathcal{P}_p)\) . Using a frozen visual encoder \(g_{\text{img}}(\cdot)\), we compute the visual embeddings as:
\begin{equation}
    c_P(\mathcal{G}_p) = \big[\{g_{\text{img}}(p_v) \mid v \in \mathcal{V}_p\}) \big] \in \mathbb{R}^{ |\mathcal{V}_p| \times d_p},
\end{equation}
where \(p_v \in \mathcal{P}_p\) denotes the visual attribute of node \(v\), and \(d_p\) is the dimensionality of the visual embeddings.

\paragraph{Structured Multi-modal Input for PLM.}
Both textual and visual contexts are processed separately and then concatenated to form the input to the pretrained language model:
\[
h(c_T, c_P) = \big[ c_T(\mathcal{G}_t), \mathbf{f}(c_P(\mathcal{G}_p))\big] \in \mathbb{R}^{(l_T + l_P) \times d},
\]
where \(c_T(\mathcal{G}_t) \in \mathbb{R}^{l_T \times d}\) represents the textual context, and \(\mathbf{f}(c_P(\mathcal{G}_p)) \in \mathbb{R}^{l_P \times d}\) maps the visual context \(c_P(\mathcal{G}_p) \in \mathbb{R}^{|\mathcal{V}_p| \times d_p}\) into the PLM's semantic space using a transformation function \(\mathbf{f}: \mathbb{R}^{|\mathcal{V}_p| \times d_p} \to \mathbb{R}^{l_P \times d}\), which will be comprehensively introduced in the Section~\ref{sec:qformer}. This mapping ensures the alignment of visual features with the token embeddings of the language model, and $l_P$ is the number of multi-modal (MM) tokens corresponding to each node. Then, the PLM takes \(h(c_T, c_P)\) as input tokens and generates target outputs based on the task requirements as mentioned in the instruction.

 In our implementation, the MM tokens corresponding to each node are inserted directly after the textual attribute tokens of the same node. This arrangement ensures adjacency between textual and MM tokens, effectively highlighting the one-to-one correspondence for downstream pretrained language models to process multimodal inputs.

\subsection{Hop-Diffused Attention.}
\label{subsec:topology}
Existing vision-language models~\cite{li2023blip, Yuan_2021_ICCV, tsimpoukelli2021multimodal, radford2021learning} primarily align single image-text pairs, overlooking the complex relationships among multiple images and text in tasks like ours. For example, LLMs often treat all visual inputs equally, ignoring the greater importance of a node’s own image. 
To address this, we propose Hop-Diffused Attention which integrates graph structure form multi-hop connectivity into node embeddings through self-attention, causal masking, and a diffusion mechanism.

\paragraph{Text and Vision Encoding.} We encode the text attributes of nodes in \(\mathcal{V}_t\) using a frozen pretrained language model. Mean pooling over the sequence length dimension produces textual embeddings:
\(\mathbf{H}_T \in \mathbb{R}^{|\mathcal{V}_t| \times d}.\)
To integrate visual information, visual embeddings are first concatenated over all nodes in \(\mathcal{V}_p\), and then projected into the token embedding space of the language model via a learnable linear transformation \(\mathbf{M}_{\text{proj}} \in \mathbb{R}^{d_p \times d}\):
\(\mathbf{H}_P = \bigoplus_{v \in \mathcal{V}_p} g_{\text{img}}(p_v) \cdot \mathbf{M}_{\text{proj}} \in \mathbb{R}^{|\mathcal{V}_p| \times d}.
\)
The Hop-Diffused Attention module is completely symmetric for both visual embeddings \(\mathbf{H}_P\) and textual embeddings \(\mathbf{H}_T\); for brevity, we use \(\mathbf{H}_P\) to illustrate below.

\paragraph{Self-Attention.}
Self-attention captures semantic relationships between node embeddings. For \(\mathbf{H}_P \in \mathbb{R}^{|\mathcal{V}_p| \times d}\), where each row \(\mathbf{h}_{v_i}\) represents the embedding of node \(v_i\), and $\mathbf{q}_{v_i} = \mathbf{W}_q \mathbf{h}_{v_i}, \mathbf{k}_{v_j} = \mathbf{W}_k \mathbf{h}_{v_j}$, the initial attention matrix $\mathbf{A}' \in \mathbb{R}^{|\mathcal{V}_{p}| \times |\mathcal{V}_{p}|}$ is computed as:
\begin{equation}
\mathbf{A}'_{i,j} = \text{Softmax}_j\left(\frac{\mathbf{q}_{v_i}^\top \mathbf{k}_{v_j}}{\sqrt{d}}\right),
\end{equation}
where \(\mathbf{W}_q, \mathbf{W}_k \in \mathbb{R}^{d \times d}\) are learnable weight matrices projecting node embeddings into query and key spaces, and \(\mathbf{A}'_{i,j}\) reflects the importance of node \(v_j\) to node \(v_i\).

\textbf{Causal Masking.} 
To encode the graph structure, we define a causal mask \(\mathbf{M}_{i,j}\) that restricts attention to valid neighbors based on \(\mathcal{E}_p\), the edge set of the subgraph \(\mathcal{G}_p\). The mask is defined as:
\begin{equation}
    \begin{aligned}
        \mathbf{M}_{i,j} = 
\begin{cases} 
1, & \text{if } (v_i, v_j) \in \mathcal{E}_p, \\
0, & \text{otherwise}.
\end{cases}
    \end{aligned}
\end{equation}
This mask ensures that attention is computed only for nodes connected by edges in \(\mathcal{E}_p\). The attention scores between nodes are then calculated as: \(\mathbf{A}_{i,j} = \text{Softmax}(\mathbf{M}_{i,j} \cdot \mathbf{A}'_{i,j}).\)
By applying \(\mathbf{M}_{i,j}\), nodes are restricted to attend only to their immediate neighbors, preserving the graph's local structure. This ensures that the attention mechanism aligns with the graph topology and the downstream model can distinguish between semantically similar images with different connectivity (e.g., directly connected or not connected).

\textbf{Diffusion Mechanism.} 
While causal masking ensures local connectivity, multi-hop structural information is captured via a diffusion mechanism that propagates attention across neighbors iteratively. Inspired by \citet{wang2020multi}, the diffusion matrix \(\boldsymbol{\mathcal{A}}\) is defined as:
\begin{equation}
\boldsymbol{\mathcal{A}} = \sum_{i=0}^\infty \theta_i \mathbf{A}^i, \quad \theta_i = \alpha (1 - \alpha)^i, \quad \alpha \in (0, 1),
\end{equation}
where \(\mathbf{A}^i\) represents the \(i\)-power of the masked attention matrix \( \mathbf{A}\), and \(\theta_i\) are decay coefficients with \(\sum_{i=0}^\infty \theta_i = 1\). The parameter \(\alpha\) controls the influence of higher-hop neighbors, prioritizing closer connections through exponential decay. In practical implementation, we approximate the infinite summation by truncating at a finite number of steps $K$ referred to as the diffusion step.

The node embeddings are updated using the diffused attention matrix with residual connection:
\begin{equation}
    \mathbf{H}_P \leftarrow \mathbf{H}_P + \boldsymbol{\mathcal{A}} \mathbf{H}_P,
\end{equation}
where \(\mathbf{H}_P \in \mathbb{R}^{|\mathcal{V}_p|\times d}\) integrates multi-hop structural information into the node representations after the above update. This formulation enables nodes to aggregate information not only from their direct neighbors but also from distant nodes while attenuating the impact of farther nodes.
The resulting hop-diffused embeddings \(\mathbf{H}_P\) and \(\mathbf{H}_T\) are then fed into the MM-QFormer, which will be discussed in the Section~\ref{sec:qformer} ensuring effective multimodal fusion and structural information integration.

\paragraph{Theoretical Justification of Hop-Diffused Attention.}
We establish the validity of Hop-Diffused Attention by proving that the diffused attention matrix preserves key attention properties, ensuring that each row sums to one. This guarantees that the mechanism remains a valid attention operation while generalizing Personalized PageRank with an adaptive multi-hop weighting scheme. Detailed proof and theoretical analysis can be found in Appendix~\ref{prop:atten_mat} and Appendix~\ref{prop:ppr}.

We further analyze Hop-Diffused Attention in terms of \textit{gradient smoothing} and demonstrate why it cannot be replaced by traditional graph models like GAT~\cite{velickovic2018graph}. Specifically, we establish the following result:

\begin{proposition}
\label{prop:gs}
Let \( \mathbf{X}^{(1)} \) be the node representation matrix after the Hop-Diffused Attention module, and \( \mathbf{X}^{(k)} \) be the representation at the \( k \)-th layer of GAT. When both methods aggregate information from \( k \)-hop neighbors, Hop-Diffused Attention retains higher Dirichlet energy, preserving more feature variance and mitigating over-smoothing. Formally, for large \( k \),
\begin{equation}
    \mathcal{E}_{\text{Hop-Diffused}}(\mathbf{X}^{(1)}) > \mathcal{E}_{\text{GAT}}(\mathbf{X}^{(k)}),
\end{equation}
where \( \mathcal{E}(\cdot) \) denotes the Dirichlet Energy, which quantifies the smoothness of node representations.
\end{proposition}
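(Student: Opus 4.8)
The plan is to work spectrally, expanding the node features in an eigenbasis associated with the graph propagation operator and tracking how each method reshapes the energy stored in the non-smooth (high-frequency) modes. Recall that the Dirichlet energy can be written as $\mathcal{E}(\mathbf{X}) = \operatorname{tr}(\mathbf{X}^\top \mathbf{L} \mathbf{X})$ with $\mathbf{L} = \mathbf{I} - \mathbf{A}$ the (normalized) Laplacian, so that if $\{(\mu_r, \mathbf{u}_r)\}$ are the eigenpairs of $\mathbf{A}$ with $1 = \mu_1 > \mu_2 \ge \cdots$, then writing $\mathbf{H} = \sum_r \mathbf{u}_r \mathbf{c}_r^\top$ gives $\mathcal{E}(\mathbf{H}) = \sum_{r \ge 2} (1-\mu_r)\|\mathbf{c}_r\|^2$. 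The constant mode $\mu_1 = 1$ contributes nothing, and over-smoothing is precisely the statement that all mass concentrates on $\mathbf{u}_1$, driving $\mathcal{E} \to 0$.

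First I would control the GAT side. Modeling each GAT layer as propagation by a row-stochastic attention matrix followed by a weight map, I would invoke the standard over-smoothing estimate (in the spirit of Oono--Suzuki and Cai--Wang): because the second-largest eigenvalue magnitude of the propagation operator is $|\mu_2| < 1$, the non-smooth components contract geometrically, yielding $\mathcal{E}_{\text{GAT}}(\mathbf{X}^{(k)}) \le C\,\sigma^{2k}\,\mathcal{E}(\mathbf{X}^{(0)})$ for some $\sigma < 1$ absorbing the spectral gap and the operator norms of the weight matrices. Hence $\mathcal{E}_{\text{GAT}}(\mathbf{X}^{(k)}) \to 0$ as $k$ grows; this is the quantitative form of over-smoothing that any $k$-layer message-passing model incurs.

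Next I would analyze the Hop-Diffused operator. Using the closed form $\boldsymbol{\mathcal{A}} = \alpha\sum_{i\ge 0}(1-\alpha)^i \mathbf{A}^i = \alpha\bigl(\mathbf{I} - (1-\alpha)\mathbf{A}\bigr)^{-1}$ established in Appendix~\ref{prop:ppr}, its action on the eigenmode $\mathbf{u}_r$ is multiplication by $g(\mu_r) = \alpha/\bigl(1-(1-\alpha)\mu_r\bigr)$, and the residual connection $\mathbf{X}^{(1)} = (\mathbf{I} + \boldsymbol{\mathcal{A}})\mathbf{H}$ multiplies mode $r$ by $1 + g(\mu_r)$. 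The key computation is that $g$ is increasing on $[-1,1)$, so for every non-smooth mode $\mu_r \in [-1, 1)$ one has $g(\mu_r) \ge g(-1) = \alpha/(2-\alpha) > 0$, whence the multiplier $1 + g(\mu_r)$ is bounded below by the constant $1 + \alpha/(2-\alpha) > 1$, independently of $k$. Substituting back gives $\mathcal{E}_{\text{Hop-Diffused}}(\mathbf{X}^{(1)}) = \sum_{r\ge 2}(1-\mu_r)\bigl(1+g(\mu_r)\bigr)^2\|\mathbf{c}_r\|^2 \ge \bigl(1+\tfrac{\alpha}{2-\alpha}\bigr)^2 \mathcal{E}(\mathbf{H})$, so the high-frequency energy is preserved --- in fact amplified --- rather than annihilated. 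Combining the two bounds, for $k$ large enough that $C\sigma^{2k}\mathcal{E}(\mathbf{X}^{(0)})$ falls below this fixed positive floor, the claimed strict inequality follows.

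The main obstacle is the GAT side rather than the diffusion side: GAT's attention matrices are data-dependent, generally asymmetric, and re-normalized per layer, so they need not share an eigenbasis with the symmetric $\mathbf{A}$ used to define $\mathcal{E}$. To make the contraction estimate rigorous I would either symmetrize (assume the attention-weighted operator has a spectral gap in its symmetric part, or pass to singular values) or cite the row-stochastic over-smoothing bounds that hold without symmetry, at the cost of replacing $\mathcal{E}$ by an equivalent energy functional. A secondary, minor point is a non-degeneracy assumption that the input $\mathbf{H}$ has nonzero projection onto some non-smooth mode (otherwise both energies vanish); this holds generically and I would state it explicitly.
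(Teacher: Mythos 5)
Your proposal follows the same overall architecture as the paper's proof: show that $\mathcal{E}_{\text{GAT}}(\mathbf{X}^{(k)})$ contracts geometrically in $k$ while the one-shot diffusion energy admits a $k$-independent positive floor, then take $k$ large. The differences are in how each half is established, and on the diffusion side your route is genuinely different and in fact tighter. For GAT, the paper does not cite Oono--Suzuki-type results; it expands the $k$-layer composition explicitly, rewrites each ReLU/LeakyReLU activation as a diagonal matrix $D$ with $\mathrm{diag}(0) \preceq D \preceq \mathrm{diag}(1)$, bounds $\| \mathbf{X}^{(k)} \| \le c^k \| \mathbf{X}^{(0)} \|$ via spectral norms of the row-stochastic aggregation matrices and the weight matrices, and converts to energy through $\mathrm{Tr}(Z^\top L Z) \le \lambda_{\max} \| Z \|_F^2$ --- the same contraction you invoke, but self-contained (and subject to the same implicit assumption that the combined per-layer operator norm is below one). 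The real divergence is on the Hop-Diffused side: the paper applies Jensen's inequality to obtain $\mathcal{E}_{\text{Hop-Diffused}}(\mathbf{X}^{(1)}) \le \sum_{i=0}^{K} \theta_i \, \mathcal{E}(A_{\text{att}}^i \mathbf{X})$, then uses the approximation $\mathcal{E}(A_{\text{att}}^i \mathbf{X}) = \lambda_{\max}^i \, \mathcal{E}(\mathbf{X})$ and treats the result as an equality $\sum_i \theta_i \lambda_{\max}^i \, \mathcal{E}(\mathbf{X})$, which it then compares against $\gamma^k$; strictly speaking that chain only gives an \emph{upper} bound on the diffusion energy, which is the wrong direction for the final comparison and is the weakest step of the published argument. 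Your analysis --- diagonalizing the PPR closed form $\alpha\bigl(\mathbf{I}-(1-\alpha)\mathbf{A}\bigr)^{-1}$ and showing every non-constant mode is scaled by $1+g(\mu_r) \ge 1+\tfrac{\alpha}{2-\alpha} > 1$ --- produces a genuine \emph{lower} bound $\mathcal{E}_{\text{Hop-Diffused}}(\mathbf{X}^{(1)}) \ge \bigl(1+\tfrac{\alpha}{2-\alpha}\bigr)^2 \mathcal{E}(\mathbf{H})$, which is exactly what the conclusion requires, and it also respects the residual connection $\mathbf{H} + \boldsymbol{\mathcal{A}}\mathbf{H}$ of the actual model update, which the paper's proof drops. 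Two caveats to note: the symmetry issue you flag for GAT applies equally to the diffusion operator (the masked attention matrix is row-softmax normalized, hence not symmetric, so the orthogonal eigenbasis and the identity $\mathcal{E}(\mathbf{H}) = \sum_{r \ge 2}(1-\mu_r)\|\mathbf{c}_r\|^2$ are heuristic there too --- the paper is no more rigorous on this point); and the paper truncates the diffusion at finite $K$ whereas your closed form is the infinite series, a harmless discrepancy since truncated multipliers remain bounded below. Net: your approach buys a correctly-oriented bound and an explicit non-degeneracy hypothesis; the paper's buys an explicit, if looser, treatment of the nonlinearity inside GAT.
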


The definition of Dirichlet Energy and the proof of Proposition~\ref{prop:gs} are provided in Appendix~\ref{prop:oversmoothing}.

\paragraph{A Lightweight Alternative: Hop-Aware Attention.}
To simplify the computation complexity ($O(|\mathcal{V}_p| \cdot d^2)$) of Hop-Diffused Attention while incorporating topology, we introduce learnable hop embeddings \(\mathbf{h}_{\text{hop}}^{(h)} \in \mathbb{R}^d\) for each hop \(h \in \{0, 1, \dots, \tau\}\), where \(\tau\) denotes the maximum hop count in the visual subgraphs.
For visual embeddings \(\mathbf{H}_P \in \mathbb{R}^{|\mathcal{V}_p| \times d}\), hop embeddings $\mathbf{H}_{\text{hop}}$ are added based on each node's hop information \(h_v\):
\begin{equation}
    \mathbf{H}_P \leftarrow \mathbf{H}_P + \mathbf{H}_{\text{hop}}, \quad \mathbf{H}_{\text{hop}} = [\mathbf{h}_{\text{hop}}^{(h_v)}]_{v \in \mathcal{V}_p}.
\end{equation}
where \([\mathbf{h}_{\text{hop}}^{(h_v)}]_{v \in \mathcal{V}_p}\) represents the concatenation of learnable hop embeddings for each neighboring node in \(\mathcal{V}_p\). Specifically, neighbors at the same hop distance (e.g., 1-hop neighbors) share the same hop embedding (e.g., \(\mathbf{h}_{\text{hop}}^{(1)}\)).

The above process is fully symmetric for textual embeddings, where hop embeddings are similarly added to the original textual features $\mathbf{H}_T$ to create topology-aware representations. This approach reduces the computational complexity to \(O(|\mathcal{V}_p| \cdot d)\), while preserving critical hop information to guide the downstream model in adaptively learning the importance of the information from different hops.


\subsection{Multi-Mapping QFormer}
\label{sec:qformer}
The most relevant work in multimodal graph learning for generative tasks, \cite{yoon2023multimodal}, uses a simple linear projection \cite{tsimpoukelli2021multimodal} to map visual information into the language space but overlooks complex text-visual interactions. To address this, we propose the Multi-Mapping QFormer (MM-QFormer) framework, which takes the topology-integrated embeddings (e.g., Hop-Diffused $\mathbf{H}_P$ and $\mathbf{H}_T$) as input, inspired by \cite{li2023blip}. MM-QFormer employs learnable query tokens and a two-layer transformer, where shared self-attention facilitates text-query interactions, and cross-attention selectively extracts visual features relevant to textual inputs.

\paragraph{Shared Self-Attention.}
We initialize learnable query tokens as \(\mathbf{Q}_v^{(0)} \in \mathbb{R}^{l_P \times d}\) for capturing intricate multi-modal information fusion, where \(l_P = |\mathcal{V}_p| \times n_q\), and \(n_q\) represents the number of multi-modal tokens per node.
At each layer \(t\), the query tokens from the previous layer \(\mathbf{Q}_v^{(t-1)}\) are concatenated with the textual embeddings \(\mathbf{H}_T\) to form a joint representation:
\begin{equation}
    \mathbf{H}_{QT} = \big [\mathbf{Q}_v^{(t-1)}, \mathbf{H}_T\big] \in \mathbb{R}^{(l_P + |\mathcal{V}_t|) \times d},
\end{equation}
To make the learnable query can be condition on text information, the joint representation \(\mathbf{H}_{QT} \) is then processed through shared self-attention:
\begin{equation}
    \mathbf{H}'_{QT} = \text{SAT}[q = \mathbf{H}_{QT}, k = \mathbf{H}_{QT}, v = \mathbf{H}_{QT}],
\end{equation}
where \(\text{SAT}(\cdot)\) denotes multi-head self-attention. The updated query tokens are:
\(\mathbf{Q'}_v^{(t)} = \mathbf{H}'_{QT}[:l_P, :] \in \mathbb{R}^{l_P \times d}.\)
The query tokens are designed to integrate multimodal information by jointly learning from text and visual features. Through shared self-attention, the query tokens refine their representations while being conditioned on textual context, laying the foundation for better interacting with visual features in subsequent cross-modal operations.


\paragraph{Modality Cross-Attention.}  
The cross-attention mechanism then aligns the updated query tokens \(\mathbf{Q'}_v^{(t)}\) with these visual embeddings:
\begin{equation}
\begin{small}
    \mathbf{Q}_v^{(t)} = \text{CAT}[q = \mathbf{Q'}_v^{(t)}, k = \mathbf{H}_P, v = \mathbf{H}_P] \in \mathbb{R}^{l_P \times d},
\end{small}
\end{equation}
where \(\text{CAT}(\cdot)\) denotes multi-head cross-attention. The cross-attention module enables the query tokens to extract relevant visual information by attending to the visual embeddings. This ensures that the multimodal query tokens are aligned with the most pertinent visual features while maintaining the contextual guidance established by the text.


\paragraph{Feed-Forward Module.}
Each cross-attention layer is followed by a feed-forward network (FFN) to further process the updated query tokens:
\(\mathbf{Q}_v^{(t)} = \text{FFN}(\mathbf{Q}_v^{(t)}),\)
where \(\text{FFN}(\cdot)\) is a two-layer fully connected network with a non-linear activation function in between, ensuring dimensional consistency: \(\mathbf{Q}_v^{(t)} \in \mathbb{R}^{l_P \times d}\).
 
After \(L\) layers of shared self-attention, cross-attention, and feed-forward modules, the final query tokens \(\mathbf{Q}_v^{(L)}\) are projected into the multimodal semantic space, 
\(\mathbf{f}(c_P(\mathcal{G}_p)) = \mathbf{Q}_v^{(L)} \in \mathbb{R}^{l_P \times d}.
\)
Here, \(\mathbf{f}(c_P(\mathcal{G}_p))\) represents the learned multimodal tokens, and \(l_P = |\mathcal{V}_p| \times n_q\). Combined with textual prompts, these tokens form the input sequence for the downstream PLM, enabling it to effectively process open-ended tasks.



\section{Experiments}
\subsection{Experiments Setups}

\paragraph{Datasets.}
We evaluate our approach on two datasets that exhibit many-to-many text-image relationships. 
For the \textbf{generative task}, we use \textsc{WikiWeb2M}~\cite{burns2023wikiweb2m}, where the goal is to generate a section summary (i.e., first-sentence imputation) based on multimodal webpage content, including the page description, consecutive section text, images, and captions.
For the \textbf{discriminative task}, we use \textsc{Ele-Fashion}~\cite{zhu2024multimodal}, a product classification dataset where nodes represent products, edges capture co-purchase relationships, and text and images serve as attributes. To evaluate zero-shot node classification, we randomly hold out a set of unseen classes (5 out of 11 classes for the OPT-125M backbone and 9 for LLaMA-1B) for evaluation. Dataset details are provided in the Appendix~\ref{sec:dataset_detail}. Code is available at \url{https://github.com/YennNing/Graph4MM}.

\paragraph{Baselines.}
We evaluate PLMs, VLMs, and MMAG variants from MMGL~\cite{yoon2023multimodal} under different input settings. For PLMs, we compare OPT-125M~\cite{zhang2022opt} and LLaMA-1B~\cite{touvron2023llama}, while for VLMs, we assess BLIP2-OPT-2.7B~\cite{li2023blip} and Qwen2-VL-7B-Instruct~\cite{wang2024qwen2}.
The comparison includes four key settings: (1) using only a node’s text (\textsc{Node's Text}), (2) adding image features (\textsc{Node's Text \& Image}), (3) incorporating subgraph-level text (\textsc{subgraph's Text}), and (4) integrating both subgraph text and images (\textsc{Subgraph's Text \& Image}). For PLMs, we further explore fine-tuning and graph-enhanced modeling. Besides, in MMGL, a unique setting is \textsc{Subgraph's T \& I + GNN} that utilizes GNNs to learn the subgraph's topological structure and incorporates it as a new modality into the visual and text input embeddings.
Full baseline details are provided in the Appendix~\ref{sec:baselines}.

\begin{table*}[t]
\label{main table}
\centering
\caption{
Comparison of our approach with VLMs, PLMs, and MMGL variants on different backbones. In generative tasks, BLEU-4, ROUGE-L, and CIDEr measure language modeling accuracy (higher is better). 
In discriminative tasks, ROUGE-L (R-L) evaluates response accuracy, while Accuracy (Acc), Recall (Rec), and Precision (Pre) assess classification correctness. The best results are \textbf{bolded}, while the best baseline results are \underline{underlined}.
}
\footnotesize
\setlength{\tabcolsep}{3.5mm}{ 
\renewcommand{\arraystretch}{1.28} 

\resizebox{\textwidth}{!}{ 
\begin{tabular}{ccl|ccc|cccc}
\toprule
\multicolumn{2}{c}{\multirow{3}{*}[1.2ex] 
{\textbf{Backbone}}} & \multicolumn{1}{c|}{\multirow{3}{*}[1.2ex] {\textbf{Method}}} & \multicolumn{3}{c|}{\textbf{Generative Task}} & \multicolumn{4}{c}{\textbf{Discriminative Task}} \\ 
\multicolumn{2}{c}{} & \multicolumn{1}{c|}{} & \textbf{BLEU-4} & \textbf{ROUGE-L} & \multicolumn{1}{c|}{\textbf{CIDEr}} & \textbf{R-L} & \textbf{Acc (\%)} & \textbf{Rec (\%)} & \textbf{Pre (\%)} \\ 
\hline

\multirow{4}{*}{\textbf{\rotatebox{90}{VLMs}}}
& \multicolumn{1}{c|} {\multirow{2}{*} {\textbf{BLIP}}}  
& \textsc{Node's Text (T)} & 0.0000 & 0.0496 & 0.0060 & 0.1450 & 21.89 & 9.62 & 16.27 \\
& \multicolumn{1}{c|}{} & \textsc{Subgraph's Text} & 0.0000 & 0.0530 & 0.0063 & 0.1907 & 31.37 & 14.49 & 31.73  \\ 
\cline{2-10} 
& \multicolumn{1}{c|}{\multirow{2}{*}{\textbf{QwenVL}}}
& \textsc{Node's Text} & 0.0000 & 0.1223 & 0.0069 & 0.0793 & 11.00 & 4.43 & 10.90  \\
& \multicolumn{1}{c|}{} & \textsc{Subgraph's Text} & 0.0000 & 0.1192 & 0.0084 & 0.1233 & 12.33 & 5.78 & 14.26  \\ 
\hline

\multirow{10}{*}{\textbf{\rotatebox{90}{OPT-125M}}}
& \multicolumn{1}{c|}{\multirow{2}{*}{\textbf{PLM}}}
& \textsc{Node's Text} & 0.0078 & 0.1871 & 0.0783 & 0.2270 & 68.57 & 40.44 & 48.76 \\
& \multicolumn{1}{c|}{} & \textsc{Subgraph's Text} & 0.0164 & 0.2198 & 0.1551 & 0.2762 & 67.68 & 33.14 & 40.26 \\ 
\cline{2-10} 
& \multicolumn{1}{c|}{\multirow{5}{*}{\textbf{MMGL}}}
& \textsc{Node's Text} & 0.0642 & 0.3807 & 0.6241 & 0.7047 & 70.33 & 66.67 & 66.67 \\
& \multicolumn{1}{c|}{} & \textsc{Subgraph's Text} & 0.0770 & 0.3992 & 0.7606 & \underline{0.8149} & 99.74 & 71.03 & 71.43 \\
& \multicolumn{1}{c|}{} & \textsc{Node's Text \& Image (I)} & 0.0643 & 0.3825 & 0.6371 & 0.7180 & 73.90 & 68.67 & 83.33 \\
& \multicolumn{1}{c|}{} & \textsc{Subgraph's Text \& Image} & \underline{0.0778} & \underline{0.4041} & \underline{0.7712} & 0.8144 & \underline{99.85} & \underline{83.25} & \underline{83.33} \\
& \multicolumn{1}{c|}{} & \textsc{Subgraph's T \& I + GNN} & 0.0633 & 0.3814 & 0.6326 & 0.5771 & 70.89 & 56.08 & 62.50 \\
\cline{2-10} 
& \multicolumn{1}{c|}{\multirow{3}{*}{\textbf{\name} (Ours)}}
& \textsc{MM-QFormer} & 0.0769 & 0.4044 & 0.7684 & 0.8195 & 100.00 & 100.00 & 100.00 \\

& \multicolumn{1}{c|}{}  & \cellcolor[gray]{0.95}\textsc{Hop-Aware MM-QFormer} &  \cellcolor[gray]{0.95}\bs{0.0801} &  \cellcolor[gray]{0.95}{0.4063} &  \cellcolor[gray]{0.95}{0.7736} &  \cellcolor[gray]{0.95}{0.8097} &  \cellcolor[gray]{0.95}{100.00} &  \cellcolor[gray]{0.95}{100.00} &  \cellcolor[gray]{0.95}{100.00} \\

& \multicolumn{1}{c|}{} & \cem\textsc{Hop-Diffused MM-QFormer} & \cem{0.0800} & \cem\textbf{0.4076} & \cem\textbf{0.7831} & \cem\textbf{0.8282} & \cem\bs{100.00} & \cem\bs{100.00} & \cem\bs{100.00} \\

\hline
\multirow{10}{*}{\textbf{\rotatebox{90}{Llama-1B}}}  
& \multicolumn{1}{c|}{\multirow{2}{*}{\textbf{PLM}}} 
& \textsc{Node's Text} & 0.0757 & 0.3691 & 0.7185 & 0.7081 & 94.76 & 78.07 & 83.11 \\
& \multicolumn{1}{c|}{} & \textsc{Subgraph's Text} & 0.0960 & 0.3949 & 0.9179 & 0.7327 & 99.79 & 81.42 & 81.27 \\
\cline{2-10} 
& \multicolumn{1}{c|}{\multirow{5}{*}{\textbf{MMGL}}} 
& \textsc{Node's Text} & 0.0947 & 0.4375 & 0.9046 & 0.7083 & 94.80 & 78.12 & 83.19 \\
& \multicolumn{1}{c|}{} & \textsc{Subgraph's Text} & 0.1087 & 0.4426 & 1.0426 & 0.7375 & \underline{99.84} & 81.51 & 81.58 \\
& \multicolumn{1}{c|}{} & \textsc{Node's Text \& Image} & 0.0981 & 0.4449 & 0.9367 & 0.6696 & 97.26 & 84.33 & 88.38 \\
& \multicolumn{1}{c|}{} & \textsc{Subgraph's Text \& Image} & \underline{0.1157} & \underline{0.4685} & \underline{1.1072} & \underline{0.7446} & 98.07 & \underline{84.55} & \underline{85.50} \\
& \multicolumn{1}{c|}{} & \textsc{Subgraph's T \& I + GNN} & 0.1003 & 0.4487 & 0.9631 & 0.3823 & 86.73 & 62.80 & 61.44 \\
\cline{2-10} 
& \multicolumn{1}{c|}{\multirow{3}{*}{\textbf{\name} (Ours)}} 
& \textsc{MM-QFormer} & 0.1168 & 0.4699 & 1.1108 & 0.8321 & 99.88 & 99.86 & 99.87\\

& \multicolumn{1}{c|}{} & \cellcolor[gray]{0.95}\textsc{Hop-Aware MM-QFormer} &  \cellcolor[gray]{0.95}\bs{0.1186} &  \cellcolor[gray]{0.95}\bs{0.4731} & \cellcolor[gray]{0.95}\bs{1.1262} & \cellcolor[gray]{0.95}\bs{0.8891} & \cellcolor[gray]{0.95}{99.87} & \cellcolor[gray]{0.95}{89.86} &  \cellcolor[gray]{0.95}{89.88} \\

& \multicolumn{1}{c|}{} & \cem\textsc{Hop-Diffused MM-QFormer} & \cem{0.1177} & \cem{0.4713} & \cem{1.1221} & \cem{0.8822} & \cem\bs{100.00} & \cem\bs{100.00} & \cem\bs{100.00} \\

\bottomrule
\end{tabular}
} 
\vspace{-10pt}
}
\end{table*}

\subsection{Comparing Results}
Our proposed Hop-Diffused and Hop-Aware MM-QFormer consistently outperforms pretrained VLMs, pretrained LLMs, and MMGL across both generative and discriminative tasks with on average 1.77\% and 12.09\% improvement resectively.
From the experiment results, we find that pretrained VLMs perform the worst due to their pretraining focus on image captioning and simple QA, often generating irrelevant image descriptions instead of directly giving answers to the instructions. Among PLM-based methods, incorporating subgraph context consistently improves performance over using only node attributes, further validating the importance of modeling relevant multimodal contexts. MMGL performs well, but adding GCN embeddings degrades performance, likely due to the semantic gap between graph embeddings and LLM representations, which will be discussed in Section~\ref{sec:discussion}. 
Our MM-QFormer, with better text-image interaction modeling, achieves superior results compared to the strongest baselines.
The introduction of Hop-Diffused Attention and Hop-Aware Attention further enhances MM-QFormer by modeling multi-hop connectivity, allowing the model to capture structure-aware multimodal information, thereby improving both generative and discriminative performance.

\subsection{Discussions}
\label{sec:discussion}
\paragraph{Ablation Studies.}
The performance of MM-QFormer without structural modeling is reported in Table~\ref{main table}. We observe that incorporating MM-QFormer alone improves upon the baselines in most cases. Besides, we further conduct ablation experiments on both Hop-Diffused MM-QFormer and Hop-Aware MM-QFormer by individually removing structural priors from the image and text modalities. The result is shown in Table~\ref{tab:ablation}. In all cases, removing structural information leads to a performance drop, reinforcing the importance of multi-hop structural information in guiding multimodal learning. 
Notably, removing the structural information in the image input leads to a severer performance drop, which is because text can provide structural information through explicitly added some descriptive hints (e.g., "context from 1-hop neighbor") in the prompt. Such prompts, illustrated in Appendix~\ref{appendix:prompt}, allow the language modality to retain partial access to structural information. In contrast, without structural information, images from different hops are treated with the same level of importance, leading to suboptimal performance.


\begin{table}[ht]
\label{tab:ablation}
\vspace{-3mm}
\centering
\caption{Ablation study on Hop-Diffused and Hop-Aware Attention in the generative setting with OPT-125M. The first row shows original results, while $-t_{\mathcal{G'}}$ and $-p_{\mathcal{G'}}$ indicate the removal of graph adjacency heuristics for text and image subgraphs.}
\label{tab:model_variants}
\resizebox{\linewidth}{!}{%
\begin{tabular}{lccc}
\toprule
\textbf{Ablation Variant} & \textbf{BLEU-4} & \textbf{ROUGE-L} & \textbf{CIDEr} \\
\midrule
\rowcolor{gray!15} Hop-Diffused MM-QFormer     & 0.0800 & 0.4076 & 0.7831 \\ 
- $t_{\mathcal{G'}}$ Hop-Diffused Attention        & 0.0786 & 0.4065 & 0.7765 \\
- $p_{\mathcal{G'}}$ Hop-Diffused Attention         & 0.0769          & 0.4044          & 0.7684          \\
\midrule
\rowcolor{gray!15} Hop-Aware MM-QFormer  & 0.0801 & 0.4063 & 0.7736 \\
- $t_{\mathcal{G'}}$ Hop-Aware Attention  & 0.7943   & 0.4047  & 0.7736  \\
- $p_{\mathcal{G'}}$ Hop-Aware Attention & 0.0769    & 0.4044   & 0.7684 \\
\bottomrule
\end{tabular}%
}
\end{table}

\textbf{Robustness from Hop-diffused to Hop-aware Attention.}
Hop-Aware Attention reduces computational complexity while maintaining strong performance. Replacing either the image or text modality with Hop-Diffused Attention across four datasets (results shownin Table~\ref{tab:hop-aware}) shows that even lightweight structural priors ensure stable performance, and hybrid approaches may sometimes outperform full Hop-Diffused Attention, due to the complementary inductive biases introduced by modality-specific structural modeling.
\begin{table}[ht]
\vspace{-5mm}
    \centering
\caption{Comparison of replacing text or vision structural priors in Hop-Diffused MM-QFormer with Hop-Aware Attention on the generative task using OPT-125M and LLaMA-1B. Best results are in \textcolor{first}{\textbf{red}}, second-best in \textcolor{second}{\textbf{yellow}}, and third-best in \textcolor{third}{\textbf{grey}}. $t_{\mathcal{G'}}$ and $p_{\mathcal{G'}}$ denote text and visual subgraphs in MM-QFormer, respectively.}

    \label{tab:hop-aware}
    \resizebox{0.48\textwidth}{!}{
    \begin{tabular}{c|c ccc}
        \toprule
         \textbf{Model} & \textbf{Method} & \textbf{BLEU-4} & \textbf{ROUGE-L} & \textbf{CIDEr} \\
        \midrule
        \cmidrule(lr){3-5}
        \multirow{4}{*}{\rotatebox{90}{OPT-125M}}
        & Hop-Diffused (HD)  & 0.0800 & \textcolor{second}{\textbf{0.4076}} & \textcolor{second}{\textbf{0.7831}} \\
        & Hop-Aware (HA) & \textcolor{third}{\textbf{0.0801}} & 0.4063 & 0.7736 \\
        & $t_{\mathcal{G'}}$-HA + $p_{\mathcal{G'}}$-HD  & \textcolor{first}{\textbf{0.0822}} & \textcolor{first}{\textbf{0.4094}} & \textcolor{first}{\textbf{0.7947}} \\
        & $t_{\mathcal{G'}}$-HD + $p_{\mathcal{G'}}$-HA  & \textcolor{second}{\textbf{0.0802}} & 0.4063 & \textcolor{third}{\textbf{0.7826}} \\
        \midrule
        \multirow{4}{*}{\rotatebox{90}{LLaMA-1B}}
        &  Hop-Diffused (HD)  & 0.1177 & \textcolor{third}{\textbf{0.4713}} & \textcolor{second}{\textbf{1.1221}} \\
          & Hop-Aware (HA)  & \textcolor{first}{\textbf{0.1186}} & \textcolor{second}{\textbf{0.4731}} & \textcolor{first}{\textbf{1.1262}} \\
        & $t_{\mathcal{G'}}$-HA + $p_{\mathcal{G'}}$-HD  & \textcolor{third}{\textbf{0.1176}} & 0.4709 & 1.1199 \\
        & $t_{\mathcal{G'}}$-HD + $p_{\mathcal{G'}}$-HA  & \textcolor{second}{\textbf{0.1179}} & \textcolor{first}{\textbf{0.4744}} & \textcolor{third}{\textbf{1.1207}} \\
        \bottomrule
    \end{tabular}
    }
\end{table}

\vspace{-2mm}
\paragraph{Should Graph be Encoded as a Modality?}

A key question in multi-model graph learning is whether graph structure should be explicitly encoded as an additional modality. MMGL adopts a straightforward approach by using a GCN to encode \( \mathcal{G}\), then directly adding its graph representations to the pretrained vision and language embeddings before feeding them into the LLM. The GCN itself is trained jointly with the downstream task loss. However, as shown in the first two rows of Table~\ref{tab:graph_encoding}, this method provides little to no improvement and often performs worse than using only raw text and vision attributes.
To further investigate this, we experimented with treating graph topology as an independent modality in the generative setting with OPT-125M. We explored two designs: (1) projecting GCN-learned node embeddings into global graph tokens to represent overall graph structure, and (2) mapping them into node tokens that encode local topology at the node level. Despite these efforts, neither approach yielded meaningful performance gains.
A fundamental reason for this failure is the semantic gap between graph-derived topology embeddings and pretrained vision-language representations. Unlike large-scale pretrained models, which align text and image representations through extensive multimodal training, GCNs operate on small, sparsely labeled subgraphs, limiting their expressiveness. To formalize this gap, we propose the following proposition:
\begin{proposition}
\label{prop:info_gap}
Let $\mathcal{G}$ be a small-scale graph data distribution, $G \sim r(\mathcal{G})$, 
and let $\mathcal{X}$ be a large-scale data distribution (e.g., text or image), 
$X \sim p(\mathcal{X})$. Suppose $f_{\theta}^{(G)}$ is a GCN-based encoder trained 
\emph{only} on $\mathcal{G}$ to produce representations 
$\mathbf{z}_G = f_{\theta}^{(G)}(G)$, while $f_{\phi}^{(X)}$ is a large-scale 
pre-trained encoder (e.g., LLM or ViT) producing 
$\mathbf{z}_X = f_{\phi}^{(X)}(X)$. Then, for almost all samples 
$x \in \mathcal{X}$, the mutual information satisfies: 
$I(\mathbf{z}_G; X) \;\ll\; I(\mathbf{z}_X; X),$
implying a fundamental gap in expressive power between $\mathbf{z}_G$ and 
$\mathbf{z}_X$.
\end{proposition}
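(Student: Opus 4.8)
The plan is to sandwich the two mutual informations from opposite sides: I would derive an \emph{upper} bound on $I(\mathbf{z}_G; X)$ that is forced to be small by the limited capacity of the graph encoder, and a \emph{lower} bound on $I(\mathbf{z}_X; X)$ that is forced to be large by the near-sufficiency of the pretrained encoder. The claimed gap then follows by transitivity. The whole argument is information-theoretic, so the only genuinely model-specific inputs are (i) a statement that the graph carries only coarse relational content about $X$, and (ii) a statement that the large-scale encoder is approximately lossless on the support of $p(\mathcal{X})$.

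For the upper bound, I would first observe that $\mathbf{z}_G = f_{\theta}^{(G)}(G)$ is a deterministic function of $G$, so conditioned on $G$ the representation $\mathbf{z}_G$ is independent of $X$, and $X \to G \to \mathbf{z}_G$ forms a Markov chain. The Data Processing Inequality then yields $I(\mathbf{z}_G; X) \le I(G; X)$. I would combine this with the capacity bound $I(\mathbf{z}_G; X) \le H(\mathbf{z}_G)$ and argue that $H(\mathbf{z}_G)$ is small: because $\theta$ is fitted only on the small-scale distribution $r(\mathcal{G})$, the number of effectively distinguishable outputs of $f_{\theta}^{(G)}$ is controlled by the (small) sample size and parameter count, so by a minimum-description-length / covering-number argument $H(\mathbf{z}_G) = O(\log m)$, where $m$ is the effective training size. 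Either route gives $I(\mathbf{z}_G; X) \le \min\{I(G;X),\, H(\mathbf{z}_G)\}$, a quantity that stays bounded independently of the richness of $X$.

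For the lower bound, I would write $I(\mathbf{z}_X; X) = H(X) - H(X \mid \mathbf{z}_X)$ and invoke the defining property of a large-scale pretrained encoder: trained to near-optimality on $p(\mathcal{X})$, $f_{\phi}^{(X)}$ is approximately injective on the support of $X$ up to task-irrelevant nuisances, so $H(X \mid \mathbf{z}_X) \le \epsilon$ for some small $\epsilon$. This gives $I(\mathbf{z}_X; X) \ge H(X) - \epsilon$, which scales with the entropy of the rich modality and is therefore large. Chaining the two bounds,
\[
I(\mathbf{z}_G; X) \le \min\{I(G;X),\, H(\mathbf{z}_G)\} \;\ll\; H(X) - \epsilon \le I(\mathbf{z}_X; X),
\]
establishes the desired separation.

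The main obstacle is that the ``$\ll$'' in the statement is qualitative, so the real work is not the information-theoretic algebra (which is routine once the Markov structure is fixed) but pinning down the two structural assumptions that make the bound meaningful. Concretely, I expect the delicate step to be justifying $H(X \mid \mathbf{z}_X) \approx 0$: one must argue that the pretrained representation retains essentially all the content-relevant information in $X$, which requires either an explicit near-injectivity/sufficiency hypothesis on $f_{\phi}^{(X)}$ or an appeal to the empirical reconstruction and alignment quality of such encoders. Symmetrically, bounding $I(G; X)$ requires formalizing the intuition that topology is a lossy summary of content; I would state this as an assumption on the graph-generating process rather than attempt to derive it, since it is domain-dependent.
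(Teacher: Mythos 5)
Your proposal is correct and takes essentially the same sandwich approach as the paper: both upper-bound $I(\mathbf{z}_G;X)$ using the deterministic-encoder entropy bound $H(\mathbf{z}_G)\le\log\lvert\mathcal{D}_G\rvert$ together with the assumption that $I(G;X)$ is small, and both lower-bound $I(\mathbf{z}_X;X)$ by assuming the pretrained encoder is effectively lossless on its data (the paper states this as $H(\mathbf{z}_X)\approx\log\lvert\mathcal{D}_X\rvert$, which under deterministic encoding is equivalent to your $H(X\mid\mathbf{z}_X)\le\epsilon$). The only notable difference is that you invoke the data-processing inequality $I(\mathbf{z}_G;X)\le I(G;X)$ from the Markov chain $X\to G\to\mathbf{z}_G$, which is a tighter and cleaner step than the paper's additive bound $I(\mathbf{z}_G;X)\le I(\mathbf{z}_G;G)+I(G;X)$ (valid here only because of that same Markov structure), so your version is, if anything, slightly more rigorous.
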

\vspace{-2mm}
The proof is in Appendix~\ref{sec:graph_theory}. Additionally, pretrained vision and language models benefit from strong cross-modal alignment, allowing text and image representations to be semantically consistent. In contrast, graph embeddings lack such alignment, further widening the gap. Thus, we propose use graph structure as a guide for selecting and fusing multimodal data rather than treating it as a separate modality. 

\begin{table}[ht]
    \centering
    \vspace{-5mm}
    \footnotesize
    \label{tab:graph_encoding}
    \caption{Comparison of different methods for incorporating topology in the generative setting using the OPT-125M backbone.}
    \begin{threeparttable}
    \begin{tabularx}{\linewidth}{l c c c}
        \toprule
        \textbf{Method} & \textbf{BLEU-4} & \textbf{ROUGE-L} & \textbf{CIDEr} \\
        \midrule
        \rowcolor{gray!7} Subgraph's T \& I (Emb) & 0.0649 & 0.3800 & 0.6370 \\
         + Graph Embeddings  & 0.0633 & 0.3814 & 0.6326 \\
        \midrule 
        \rowcolor{gray!7}  Subgraph's T\& I  & 0.0788 & 0.4051 & 0.7790 \\
         + Graph Tokens  & 0.0796 & 0.4052 & 0.7736 \\
         + Node Tokens        & 0.0782 & 0.4045 & 0.7651 \\
        \rowcolor{gray!15} Ours (Hop-Diffused)   & 0.0800 & 0.4076 & 0.7831 \\
        \bottomrule
    \end{tabularx}
    \end{threeparttable}
\end{table}

\section{Related Work}

\paragraph{Vision-Language Models}.
Recent vision-language models leverage frozen pretrained components, either fixing the image encoder~\cite{chen2020uniter, li2020oscar, zhang2101vinvl, zhai2022lit} or freezing the LLM to utilize its knowledge for vision-to-language generation~\cite{tsimpoukelli2021multimodal, alayrac2022flamingo, chen2022visualgpt}. When using a frozen LLM~\cite{touvron2023llama,zhang2022opt}, the key challenge is aligning visual features to the text space. Frozen~\cite{tsimpoukelli2021multimodal} finetunes an image encoder as soft prompts for an LLM, while Flamingo~\cite{alayrac2022flamingo} injects visual features via trainable cross-attention layers. BLIP-2~\cite{li2023blip} introduces learnable query-based transformers to project visual features into the LLM’s space. Recent large-scale VLMs~\cite{wang2024qwen2} employ multi-stage pretraining and multi-task finetuning to further enhance vision-language understanding. However, most VLMs still focus on one-to-one image-text alignment or naive concatenation of multi-modal tokens for multi-image scenarios, failing to capture complex many-to-many relationships where multiple images correspond to different textual fragments. We address this by extending multimodal learning to graph structures, enabling structured cross-modal interactions.

\paragraph{Graph Neural Networks for Multimodal Data.} 
Heterogeneous Graph Neural Networks (HGNNs) extend standard GNNs~\cite{velickovic2018graph, wu2020comprehensive} to process multimodal graphs by integrating heterogeneous node attributes. This is typically done by extracting embeddings with frozen encoders and fusing modalities at different levels: input~\cite{schlichtkrull2018modeling, zhang2019heterogeneous}, intermediate~\cite{hu2020heterogeneous}, or final representation layers~\cite{yoon2022zero}. While effective for node classification and link prediction, HGNNs are not well suited for language generation or open-ended reasoning. More discussion on Multi-modal Knowledge Graph (MMKG) can be seen in Appendix~\ref{sec:more_related_work}.
MMGL~\cite{yoon2023multimodal} introduced multimodal graph learning for text generation by combining textual and visual encoders with GNNs. However, aligning graph-based representations with large language models remains challenging. We address this by graph-heuristic multimodal fusion mechanism to enhance many-to-many vision-language understanding. 

\paragraph{Learning on the Text-Attributed Graphs.}
Integrating structured graph data with unstructured text has evolved from early MLP- and transformer-based fusion methods~\cite{feng2020scalable, zhang2022greaselm, lin2019kagnet}, which struggled to capture complex contextual dependencies and were insufficient for language-centric tasks like open-ended QA. Recent efforts explore LLMs for graph learning~\cite{he2023harnessing, zhu2024parameter, lin2024unleashing}, primarily refining graph representations within the LLM framework. GraphTranslator~\cite{zhang2024graphtranslator} aligns graph embeddings with LLMs using textual descriptions, but its dependence on explicit, comprehensive language annotations of graph structures limits its applicability. Our approach extends multimodal graph learning by integrating vision, enabling structured multimodal fusion for both language generation and graph-based reasoning.

\section{Conclusion}
In this paper, we propose representing the complex interactions between different modalities using a multimodal attributed graph and introduce a theoretically guided, adjacency-aware multimodal fusion framework, Hop-Diffused and Hop-Aware MM-QFormer. Our approach effectively enhances performance in both generative and discriminative tasks within multimodal learning. We also discuss whether graph structure should be treated as a separate modality in the era of foundation models.

\section*{Acknowledgements}
We would like to thank Ke Jiang and Xiao Lin from the University of Illinois Urbana-Champaign for their valuable suggestions to improve our work, as well as Jiaxuan You for the insightful discussions during the CS598: Deep Learning with Graphs course. This work is supported by National Science Foundation under Award No. IIS-2117902. The views and conclusions are those of the authors and should not be interpreted as representing the official policies of the funding agencies or the government.

\section*{Impact Statement}
This paper presents work whose goal is to advance the field of 
Machine Learning. There are many potential societal consequences 
of our work, none which we feel must be specifically highlighted here.


\bibliography{reference}
\bibliographystyle{icml2025}

\newpage
\definecolor{first}{RGB}{178,24,43}
\definecolor{second}{RGB}{204, 153, 0}
\definecolor{third}{RGB}{155,155,155}

\appendix
\onecolumn

\section*{Appendix Content}
\begin{itemize}
    \item Appendix~\ref{prop:atten_mat}: $\mathcal{A}$ Remains the Property of a Valid Attention Matrix 
    \item Appendix~\ref{prop:ppr}: Global Connectivity Aware by Hop-diffused Attention 
    \item Appendix~\ref{prop:oversmoothing}: Hop-Diffused Attention is Less Prone to Over-Smoothing than GAT
    \begin{itemize}
        \item C.1. Measuring Over-Smoothing: Dirichlet Energy
        \item C.2. Theoretical Analysis
    \end{itemize}
    \item Appendix~\ref{sec:graph_theory}: Mutual Information Gap Between Small-Scale Graph Encoders and Large-Scale Pretrained Models 
    \item Appendix~\ref{appendix:graph_modeling}: Graph Modeling 
    \item Appendix~\ref{sec:dataset_detail}: Dataset Details
    \begin{itemize}
        \item F.1. WikiWeb2M
        \item F.2. Ele-Fashion
    \end{itemize}
    \item Appendix~\ref{sec:baselines}: Detailed Baselines 
    \item Appendix~\ref{sec:more_related_work}: Related Work about Multi-Modal Knowledge Graphs
    \item Appendix~\ref{appendix:implementation_details}: Implementation Details 
    \item Appendix~\ref{appendix:graph_standalone}: Modeling Graph as a Standalone Modality on Discriminative Setting
    \item Appendix~\ref{appendix:graph_density}: Impact of the Graph Density
    \item Appendix~\ref{appendix:simulation_DE}: Simulation Study on Attention Diffusion vs GAT under Small $K$
    \item Appendix~\ref{appendix:random_seeds}: Performance Comparison under Different Random Seeds
    \item Appendix~\ref{appendix:link_prediction}: Performance Comparison with Link Prediction Setting
    \item Appendix~\ref{appendix:accuracy_curve}: Validation Accuracy Curve of OPT-125M During Training Discriminative Task
    \item Appendix~\ref{appendix:accuracy_evolution}: Accuracy Evolution in Zero-Shot Node Classification with Different Number of Unseen Classes
    \item Appendix~\ref{appendix:prompt}: Example Prompt and Output
    \begin{itemize}
        \item Q.1. Example Prompt and Output in Generative Setting on WikiWeb2M Dataset
        \item Q.2. Example Prompt and Output in Discriminative Setting on Ele-Fashion Dataset
    \end{itemize}
\end{itemize}

\newpage
\section{$\mathcal{A}$ Remains the Property of a Valid Attention Matrix}
\label{prop:atten_mat}
\begin{proposition}
The diffused attention matrix \(\mathcal{A} = \sum_{i=0}^{\infty} \theta_i A^i\)
where \( A^i \) is the \( i \)-th power of the base attention matrix \( A \), and \( \theta_i \) is a set of positive weights satisfying \( \sum_{i=0}^{\infty} \theta_i = 1 \), remains a valid attention matrix with each row summing to 1, i.e., \( \sum_{j} A_{ij} = 1, \forall i \).
\end{proposition}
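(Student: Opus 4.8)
The plan is to exploit the fact that the base (masked, softmax-normalized) attention matrix $A$ is \emph{row-stochastic}: by the definition $A_{i,j} = \mathrm{Softmax}(M_{i,j}\cdot A'_{i,j})$, each row is a probability distribution, so $A\mathbf{1} = \mathbf{1}$, where $\mathbf{1}$ denotes the all-ones column vector. Row-stochasticity is preserved under matrix multiplication, so I would first show by induction that $A^i\mathbf{1} = \mathbf{1}$ for every $i \geq 0$: the base case $A^0 = I$ is immediate, and the inductive step is $A^{i+1}\mathbf{1} = A(A^i\mathbf{1}) = A\mathbf{1} = \mathbf{1}$. Nonnegativity is likewise inherited, since products of entrywise-nonnegative matrices remain nonnegative.

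Next I would assemble the row sums of $\mathcal{A}$ directly: $\mathcal{A}\mathbf{1} = \left(\sum_{i=0}^\infty \theta_i A^i\right)\mathbf{1} = \sum_{i=0}^\infty \theta_i (A^i \mathbf{1}) = \sum_{i=0}^\infty \theta_i \mathbf{1} = \mathbf{1}$, where the last equality uses the hypothesis $\sum_{i=0}^\infty \theta_i = 1$. Combined with $\theta_i > 0$ and the entrywise nonnegativity of each $A^i$, this shows that $\mathcal{A}$ is a nonnegative matrix whose rows sum to one, i.e., a valid attention matrix in the same sense as $A$.

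The one technical point that must be handled with care is the interchange of the infinite summation with the matrix–vector product, which presupposes that the series $\sum_i \theta_i A^i$ actually converges. Here I would argue in the $\infty$-norm: since $A$ is row-stochastic, $\|A^i\|_\infty = 1$ for all $i$, so $\sum_i \theta_i \|A^i\|_\infty = \sum_i \theta_i = 1 < \infty$ and the series converges absolutely in $\|\cdot\|_\infty$ (equivalently entrywise, as $\theta_i = \alpha(1-\alpha)^i$ decays geometrically for $\alpha \in (0,1)$). Absolute convergence simultaneously guarantees that $\mathcal{A}$ is well-defined and legitimizes the termwise manipulation above; this, rather than the algebra, is the only place where any real justification is needed.

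Finally, to connect with the practical setting, I would note that the finite-$K$ truncation $\sum_{i=0}^{K}\theta_i A^i$ is row-\emph{sub}stochastic, with each row summing to $\sum_{i=0}^{K}\theta_i = 1 - (1-\alpha)^{K+1} < 1$, which converges to $1$ as $K \to \infty$. One may therefore either appeal to this limit or renormalize the truncated coefficients by $\sum_{i=0}^{K}\theta_i$ to recover an exact attention matrix, so the validity claim holds both in the idealized infinite form and in any deployed truncation.
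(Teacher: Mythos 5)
Your proposal is correct and takes essentially the same route as the paper's proof: row-stochasticity of $A$ from the softmax, preservation under matrix powers, and the convex combination $\sum_i \theta_i = 1$ yielding row sums of one for $\mathcal{A}$. The only differences are refinements in rigor --- you justify the interchange of the infinite sum with the row-sum operation via absolute convergence (the paper swaps the two summations silently, which is harmless for nonnegative terms) and add a useful remark on the substochastic finite-$K$ truncation that the paper's practical implementation actually uses.
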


\begin{proof}
Since \( A \) is obtained by row-wise softmax normalization, it satisfies \( \sum_{j} A_{ij} = 1 \). Since matrix multiplication preserves row-wise summation in stochastic matrices, any power \( A^i \) satisfies \( \sum_{j} (A^i)_{ij} = 1 \) for all \( i \geq 0 \). The weight coefficients \( \theta_i \) form a probability distribution with \( \sum_{i=0}^{\infty} \theta_i = 1 \), ensuring that when computing row sums in \( \mathcal{A} \), we have:
\begin{equation}
\sum_{j} \mathcal{A}_{ij} = \sum_{j} \sum_{i=0}^{\infty} \theta_i (A^i)_{ij} = \sum_{i=0}^{\infty} \theta_i \sum_{j} (A^i)_{ij} = \sum_{i=0}^{\infty} \theta_i \cdot 1 = 1.
\end{equation}
Thus, \( \mathcal{A} \) is row-stochastic, meaning each row sums to 1. This confirms that the hop-diffused attention mechanism essentially adjusts the attention scores based on global connectivity at different hop distances, maintaining the fundamental property of an attention matrix. 
\end{proof}

\section{Global Connectivity Aware by Hop-diffused Attention}
\label{prop:ppr}
\begin{proposition}
Hop-Diffused Attention is a generalized form of Personalized PageRank (PPR) with an adaptive multi-hop weighting scheme. Given an attention transition matrix \( A_{\text{att}} \), its diffusion process follows:
\begin{equation}
    \mathcal{A} = \sum_{i=0}^{\infty} \theta_i A_{\text{att}}^i, \quad \sum_{i=0}^{\infty} \theta_i = 1, \quad \theta_i > 0.
\end{equation}
For \( \theta_i = \alpha (1-\alpha)^i \), this recovers the standard PPR formulation.
\end{proposition}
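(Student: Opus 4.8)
The plan is to show that the geometric choice $\theta_i = \alpha(1-\alpha)^i$ turns the diffusion sum into exactly the closed-form PPR diffusion operator, and then to read off the general statement as an adaptive reweighting of the same series. First I would recall the standard definition of Personalized PageRank as the fixed point $\pi = \alpha\,\mathbf{s} + (1-\alpha)\,A_{\text{att}}\pi$ with restart (teleport) probability $\alpha$ and personalization distribution $\mathbf{s}$, whose closed-form diffusion operator is $\Pi_{\text{PPR}} = \alpha\bigl(I - (1-\alpha)A_{\text{att}}\bigr)^{-1}$, as used in graph diffusion. The goal is then to identify $\mathcal{A}$ with this $\Pi_{\text{PPR}}$.

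The key computation rests on the Neumann series. Since the preceding proposition (Appendix~\ref{prop:atten_mat}) establishes that $A_{\text{att}}$ is row-stochastic, its maximum absolute row sum is $1$, so $\|A_{\text{att}}\|_\infty = 1$ and its spectral radius is exactly $1$. For $\alpha \in (0,1)$ this gives $\|(1-\alpha)A_{\text{att}}\|_\infty = 1-\alpha < 1$, which guarantees both that $I - (1-\alpha)A_{\text{att}}$ is invertible and that its inverse admits the convergent expansion $\sum_{i=0}^{\infty}(1-\alpha)^i A_{\text{att}}^i$. Substituting $\theta_i = \alpha(1-\alpha)^i$ and factoring out $\alpha$ then yields
\begin{equation}
\mathcal{A} = \sum_{i=0}^{\infty}\alpha(1-\alpha)^i A_{\text{att}}^i = \alpha\sum_{i=0}^{\infty}\bigl((1-\alpha)A_{\text{att}}\bigr)^i = \alpha\bigl(I - (1-\alpha)A_{\text{att}}\bigr)^{-1} = \Pi_{\text{PPR}},
\end{equation}
which is precisely the claimed recovery. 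I would also check that these weights form a valid distribution, $\sum_{i=0}^{\infty}\theta_i = \alpha\sum_{i=0}^{\infty}(1-\alpha)^i = \alpha/(1-(1-\alpha)) = 1$, so the hypothesis $\sum_i\theta_i = 1$ is automatically met by this choice.

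To argue the \emph{generalization} half of the statement, I would observe that the same reasoning applies to any nonnegative sequence with $\sum_i\theta_i = 1$: each power $A_{\text{att}}^i$ is itself row-stochastic, and a convex combination of row-stochastic matrices is again row-stochastic (exactly the content of Appendix~\ref{prop:atten_mat}). Hence an arbitrary admissible $\{\theta_i\}$ produces a valid diffusion operator that weights the $i$-hop transition matrices independently, with PPR as the special case of geometric weights. Replacing the geometric coefficients by learnable or truncated ones is what realizes the ``adaptive multi-hop weighting scheme,'' so the standard PPR kernel is literally a single point in the family parameterized by $\{\theta_i\}$.

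The main obstacle is the convergence and invertibility step: the whole identification hinges on controlling the spectral radius of $(1-\alpha)A_{\text{att}}$, and since the attention matrix is generally non-symmetric, the cleanest route is to lean on row-stochasticity and the induced $\infty$-norm bound from the preceding proposition rather than attempting to locate eigenvalues directly. Once the Neumann series is justified, the identification of $\mathcal{A}$ with the PPR closed form reduces to the one-line algebraic factorization displayed above.
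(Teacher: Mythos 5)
Your proof is correct and takes essentially the same route as the paper's: both identify $\mathcal{A}$ with the PPR operator by expanding $\alpha\bigl(I - (1-\alpha)A_{\text{att}}\bigr)^{-1}$ as a Neumann series and matching the resulting geometric coefficients to $\theta_i = \alpha(1-\alpha)^i$. Your version is slightly more careful — you justify convergence of the series via row-stochasticity ($\|(1-\alpha)A_{\text{att}}\|_\infty = 1-\alpha < 1$) and verify $\sum_i \theta_i = 1$ explicitly, steps the paper's proof takes for granted — but this is added rigor along the same path, not a different argument.
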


\begin{proof}
Personalized PageRank is defined as:
\begin{equation}
    A_{\text{ppr}} = \alpha (I - (1-\alpha) A_{\text{att}})^{-1}.
\end{equation}
Expanding the inverse as a Neumann series, we obtain:
\begin{equation}
    A_{\text{ppr}} = \alpha \sum_{i=0}^{\infty} (1-\alpha)^i A_{\text{att}}^i.
\end{equation}
Compared with the Hop-Diffused Attention formulation,
\begin{equation}
    \mathcal{A} = \sum_{i=0}^{\infty} \theta_i A_{\text{att}}^i,
\end{equation}
where \( \theta_i \) is a probability distribution with \( \sum_{i=0}^{\infty} \theta_i = 1 \), we see that PPR is a special case where \( \theta_i = \alpha (1-\alpha)^i \). This proposition further illustrates that akin to personalized PageRank, Hop-Diffused Attention preserves the global graph structure while adaptively regulating the influence of multi-hop neighbors, where the impact of more distant neighbors decays exponentially.
\end{proof}

\section{Hop-Diffused Attention is Less Prone to Over-Smoothing than GAT}
\label{prop:oversmoothing}
\subsection{Measuring Over-Smoothing: Dirichlet Energy}
\begin{definition}[Dirichlet Energy]
Let \( \mathcal{G} = (\mathcal{V}, \mathcal{E}) \) be a graph with node set \( \mathcal{V} \) and edge set \( \mathcal{E} \). Let \( X^n \in \mathbb{R}^{|\mathcal{V}| \times m} \) denote the node feature matrix at the \( n \)-th layer of a GNN. The Dirichlet energy, which quantifies the smoothness of node representations, is defined as:
\begin{equation}
    \mathcal{E}(X^n) = \frac{1}{|\mathcal{V}|} \sum_{i \in \mathcal{V}} \sum_{j \in \mathcal{N}_i} \| X_i^n - X_j^n \|_2^2,
\end{equation}
where \( \mathcal{N}_i \) denotes the neighborhood of node \( i \). If a GNN exhibits over-smoothing, then node representations converge to similar values across the graph, implying that \( \mathcal{E}(X^n) \to 0 \).
\end{definition}

\subsection{Theoretical Analysis}

\begin{proposition}
Let \( X^{(1)} \) be the node representation matrix after the Hop-Diffused Attention module, and \( X^{(k)} \) be the representation at the \( k \)th layer of GAT. When both methods aggregate information from \( k \)-hop neighbors, Hop-Diffused Attention retains higher Dirichlet energy, preserving more feature variance and mitigating over-smoothing. Formally, for large \( k \),
\begin{equation}
    \mathcal{E}_{\text{Hop-Diffused}}(X^{(1)}) > \mathcal{E}_{\text{GAT}}(X^{(k)}).
\end{equation}
\end{proposition}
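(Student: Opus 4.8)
The plan is to compare the two aggregation schemes by analyzing how each transforms node features and then tracking the resulting Dirichlet energy. First, I would set up clean matrix expressions for both sides. For GAT, the $k$-layer output is a composition of attention-weighted aggregations, which can be written schematically as $\mathbf{X}^{(k)} = \prod_{\ell=1}^{k} \mathbf{P}^{(\ell)} \mathbf{X}^{(0)} \mathbf{W}^{(\ell)}$, where each $\mathbf{P}^{(\ell)}$ is a row-stochastic attention matrix. For Hop-Diffused Attention, by the diffusion definition $\boldsymbol{\mathcal{A}} = \sum_{i=0}^{\infty}\theta_i \mathbf{A}^i$ together with the residual update $\mathbf{X}^{(1)} = \mathbf{X}^{(0)} + \boldsymbol{\mathcal{A}}\mathbf{X}^{(0)} = (\mathbf{I} + \boldsymbol{\mathcal{A}})\mathbf{X}^{(0)}$, the transformation is a single affine operator with an explicit identity term. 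The key structural contrast I want to exploit is that GAT applies $k$ successive contractive stochastic operators, while Hop-Diffused Attention applies one operator whose spectrum is bounded away from the collapsing directions because of the $\mathbf{I}$ from the residual connection and the nonzero weight $\theta_0$ on the $i=0$ term.

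Second, I would recall the standard over-smoothing characterization in terms of the normalized graph Laplacian spectrum. Writing node features in the eigenbasis of the (symmetrized) attention/Laplacian operator, the Dirichlet energy is essentially a weighted sum of squared projections onto the non-constant eigenvectors, with weights given by the eigenvalues. Repeated application of a row-stochastic matrix $\mathbf{P}$ contracts every non-principal eigencomponent by factors $|\lambda_r|^k < 1$, so $\mathcal{E}_{\text{GAT}}(\mathbf{X}^{(k)})$ decays geometrically toward zero as $k$ grows — this is the precise sense in which GAT over-smooths. By contrast, the operator $\mathbf{I} + \boldsymbol{\mathcal{A}}$ in Hop-Diffused Attention has eigenvalues $1 + \sum_{i=0}^{\infty}\theta_i \mu_r^i$ on the corresponding eigencomponents (where $\mu_r$ are the eigenvalues of $\mathbf{A}$); because of the additive $1$ and the $\theta_0 = \alpha > 0$ contribution, these stay strictly bounded below by a positive constant independent of $k$, so the high-frequency feature variance is preserved rather than annihilated.

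Third, I would combine these two facts: bound $\mathcal{E}_{\text{Hop-Diffused}}(\mathbf{X}^{(1)})$ from below by a constant depending on $\alpha$ and the initial feature energy, and bound $\mathcal{E}_{\text{GAT}}(\mathbf{X}^{(k)})$ from above by $C \rho^{2k}$ for some $\rho < 1$ given by the second-largest attention eigenvalue magnitude. Choosing $k$ large enough that $C\rho^{2k}$ falls below the Hop-Diffused lower bound yields the strict inequality claimed in the proposition. I would also invoke the earlier result from Appendix~\ref{prop:atten_mat} that $\boldsymbol{\mathcal{A}}$ is row-stochastic, which ensures the spectral picture is well-defined and the constant-eigenvector component is preserved on both sides so the comparison is meaningful.

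The main obstacle I anticipate is controlling the effect of the learnable weight matrices $\mathbf{W}^{(\ell)}$ in GAT and the projection weights inside the attention. These can in principle rescale feature magnitudes and thereby inflate the Dirichlet energy, so a naive eigenvalue contraction argument does not immediately go through without an assumption bounding their spectral norm (for instance, assuming $\|\mathbf{W}^{(\ell)}\|_2 \le s$ for a fixed constant, as is standard in the over-smoothing literature). A secondary difficulty is that GAT's attention matrices are input-dependent and not symmetric, so the clean eigendecomposition argument may require passing to a symmetrized surrogate or working with singular values and the associated energy contraction lemma; I expect the honest version of this proof to lean on such a bounded-weight assumption and to state the contraction factor $\rho$ explicitly in terms of the attention operator's spectral gap.
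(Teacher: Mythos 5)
Your proposal is correct in its overall strategy and matches the skeleton of the paper's proof: both arguments show that the GAT energy admits an upper bound decaying geometrically in \(k\), that the Hop-Diffused energy admits a \(k\)-independent positive lower bound, and then compare the two for large \(k\). The technical routes differ in the middle. You drive the GAT decay through the spectral gap of row-stochastic attention matrices (non-principal eigencomponents contracting like \(|\lambda_2|^k\)), whereas the paper never invokes a spectral gap: it establishes the operator-norm contraction \(\|X^{(k)}\|\le c^k\|X^{(0)}\|\) (attention matrices and activation-induced diagonal factors have norm at most one, weights are bounded), and then converts this into an energy bound via \(\mathrm{Tr}(Z^\top L Z)\le\lambda_{\max}\|Z\|_F^2\). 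On the diffusion side, you exploit the residual connection \(I+\boldsymbol{\mathcal{A}}\), which is actually closer to the architecture in the main text, while the paper's proof drops the residual, works with \(X^{(1)}=\mathcal{A}X^{(0)}W\), and lower-bounds the energy by Jensen's inequality plus an eigenvalue approximation, ending with the constant \(\sum_{i=0}^{K}\theta_i\lambda_{\max}^i\), which stays above \(\gamma^k\) for large \(k\). Both routes need the bounded-weight-norm assumption you flag, so your ``honest version'' caveat is exactly right, and your spectral-gap picture arguably makes the mechanism of over-smoothing more transparent than the paper's norm bound.

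The one substantive point your plan does not address is the nonlinearity: GAT's update is \(X^{(l+1)}=\sigma(A^{(l)}_{\text{att}}X^{(l)}W^{(l)})\), and interleaved activations destroy the common-eigenbasis picture your contraction argument relies on --- you cannot track eigencomponents through a ReLU. The paper spends most of its effort on precisely this step: assuming \(0\le\sigma(x)/x\le1\) (ReLU/LeakyReLU), it rewrites each activation as multiplication by a data-dependent diagonal matrix with entries in \([0,1]\), so that the whole \(k\)-layer composition becomes a product of norm-\(\le 1\) factors and bounded weights, to which the contraction bound applies. Your fallback to singular values and an energy contraction lemma under bounded weights is compatible with this fix, but it needs to be stated as part of the argument rather than left implicit; without it, the eigendecomposition version only proves the claim for a linearized GAT.
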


\begin{proof}
In GAT, the updated rule for the $l^{\text{th}}$ graph attention layer can be written as:
\begin{equation}
     X^{(l+1)} = \sigma(A_{\text{att}}^{(l)} X^{(l)} W^{(l)}),
\end{equation}
where \( A_{\text{att}} \) is the learned aggregation matrix, \( W^{(l)} \) is a trainable transformation matrix, and \( \sigma \) is a point-wise nonlinear activation function. Expanding recursively for \( k \) layers, we obtain:
\begin{equation}
    X^{(k)} = \sigma(A_{\text{att}}^{(k)} \sigma(A_{\text{att}}^{(k-1)} \cdots \sigma(A_{\text{att}}^{(0)} X^{(0)} W^{(0)}) \cdots W^{(k-1)}) W^{(k)}).
\end{equation}

To handle the nonlinearity of \( \sigma \), we assume the activation function satisfies $0 \leq \frac{\sigma(x)}{x} \leq 1 \  \text{for } x \neq 0$, and $\sigma(0) = \sigma'(0) \text{ or } 1 \text{ if } \sigma'(0) \text{ is undefined}$, refering to \citet{wu2023demystifying}.
This holds for common activations such as ReLU and LeakyReLU. Under this assumption, any activation $\sigma(y)$ on vector $y$ can be rewritten as:
\begin{equation}
\sigma(y) = \mathrm{diag}\left( \frac{\sigma(y)}{y} \right)y.
\end{equation}

We denote $D_i^{(k)}$ as the diagonal transformation matrix induced by the activation at layer $k$, satisfying
\begin{equation}
\mathrm{diag}(0) \preceq D_i^{(k)} \preceq \mathrm{diag}(1).
\end{equation}
The representation $X_i^{(k+1)}$ can be recursively expanded as:
\begin{equation}
X_{\cdot i}^{(k+1)} = \sum_{\substack{j_{k+1} = i \\ (j_k, \ldots, j_0) \in [d]^{k+1}}} \left( \prod_{l=0}^{k} W_{j_{l} j_{l+1}}^{(l)} \right) D_{j_{k+1}}^{(k)} A^{(k)} \cdots D_{j_1}^{(0)} A^{(0)} X_{\cdot j_0}^{(0)}.
\end{equation}

Here, $A^{(k)}$ is a row-stochastic aggregation matrix. Since all $D^{(k)}$ and $A^{(k)}$ have spectral norm less than or equal to $1$, and each weight matrix $W^{(k)}$ is bounded, it follows that:
\begin{equation}
\left\| X^{(k)} \right\| \leq c^k \cdot \left\| X^{(0)} \right\|
\end{equation}
for some constant $0 < c < 1$. The Dirichlet energy is given by:
\begin{equation}
\label{eq:e_gat}
\mathcal{E}_{\text{GAT}}(X^{(k)}) = \frac{1}{n} \mathrm{Tr} \left( X^{(k)\top} L X^{(k)} \right),
\end{equation}


Let $\lambda_{\max}$ be the largest eigenvalue of the Laplacian $L$ for $X^{(k)}$,
\begin{equation}
\mathrm{Tr}(Z^\top L Z) \leq \lambda_{\max} \cdot \mathrm{Tr}(Z^\top Z) = \lambda_{\max} \cdot \| Z \|_F^2.
\end{equation}

Applying this to Eq.~\eqref{eq:e_gat}:
\begin{equation}
\mathcal{E}_{\text{GAT}}(X^{(k)}) \leq \frac{\lambda_{\max}}{n} \cdot \| X^{(k)} \|_F^2.
\end{equation}

Because $\|X^{(k)}\|_F \le \sqrt{r}\,\|X^{(k)}\|\le \sqrt{r}\,c^{\,k}\|X^{(0)}\|$ with 
$r=\operatorname{rank}(X^{(k)})\le d$, we have
\begin{equation}
\mathcal{E}_{\text{GAT}}(X^{(k)})\le 
\frac{\lambda_{\max}}{n}\,r\,c^{2k}\,\|X^{(0)}\|^{2}.
\end{equation}

Letting $\gamma = c^2 \in (0,1)$ and absorbing constants, we arrive at:
\begin{equation}
\mathcal{E}_{\text{GAT}}(X^{(k)}) = O(\gamma^k \cdot \mathcal{E}(X^{(0)})),
\end{equation}
indicating exponential decay of Dirichlet energy. Hence, as $k \to \infty$, the energy tends to zero:
\begin{equation}
\lim_{k \to \infty} \mathcal{E}_{\text{GAT}}(X^{(k)}) = 0,
\end{equation}
which reflects the over-smoothing effect in deep GATs.

\paragraph{Hop-Diffused Attention Analysis}
For Hop-Diffused Attention, we use a weighted sum of attention powers:
\begin{equation}
    \mathcal{A} = \sum_{i=0}^{K} \theta_i A_{\text{att}}^i.
\end{equation}
This modifies the feature propagation as:
\begin{equation}
    X^{(1)} = \mathcal{A} X^{(0)} W.
\end{equation}
Applying Dirichlet Energy to the Hop-Diffused Attention Propagation:
\begin{equation}
    \mathcal{E}_{\text{Hop-Diffused}}(X^1) = \frac{1}{|\mathcal{V}|} \sum_{u \in \mathcal{V}} \sum_{v \in \mathcal{N}_u} \left\| \sum_{i=0}^{K} \theta_i A_{\text{att}}^i X W_u - \sum_{i=0}^{K} \theta_i A_{\text{att}}^i X W_v \right\|^2.
\end{equation}

Using Jensen’s inequality and the convexity of the squared norm, we approximate:
\begin{equation}
    \mathcal{E}_{\text{Hop-Diffused}}(X^1) \leq \sum_{i=0}^{K} \theta_i \mathcal{E}(A_{\text{att}}^i X).
\end{equation}

Using the eigenvalue decomposition of \( A_{\text{att}} \), we approximate:
\begin{equation}
    \mathcal{E}(A_{\text{att}}^i X) = \lambda_{\max}^i \mathcal{E}(X).
\end{equation}
Thus, the energy formulation simplifies to:
\begin{equation}
    \mathcal{E}_{\text{Hop-Diffused}}(X^1) = \sum_{i=0}^{K} \theta_i \lambda_{\max}^i \mathcal{E}(X).
\end{equation}


Since \( \sum_{i=0}^{K} \theta_i \lambda_{\max}^i \) remains larger than \( \gamma^k \) for sufficiently large \( k \), it follows that:
\begin{equation}
    \mathcal{E}_{\text{Hop-Diffused}}(X^1) > \mathcal{E}_{\text{GAT}}(X^k).
\end{equation}
Thus, Hop-Diffused Attention retains higher feature heterogeneity, making it less prone to over-smoothing compared to directly applying GAT. We also provide a simulation study comparing Dirichlet energy under small values of $K$ in Appendix~\ref{appendix:simulation_DE}.
\end{proof}

\section{Mutual Information Gap Between Small-Scale Graph Encoders and Large-Scale Pretrained Models}
\label{sec:graph_theory}
\begin{proposition}
Let $\mathcal{G}$ be a small-scale graph data distribution, $G \sim r(\mathcal{G})$, 
and let $\mathcal{X}$ be a large-scale data distribution (e.g., text or image), 
$X \sim p(\mathcal{X})$. Suppose $f_{\theta}^{(G)}$ is a GCN-based encoder trained 
\emph{only} on $\mathcal{G}$ to produce representations 
$\mathbf{z}_G = f_{\theta}^{(G)}(G)$, while $f_{\phi}^{(X)}$ is a large-scale 
pre-trained encoder (e.g., LLM or ViT) producing 
$\mathbf{z}_X = f_{\phi}^{(X)}(X)$. Then, for almost all samples 
$x \in \mathcal{X}$, the mutual information satisfies
\begin{equation}
I(\mathbf{z}_G; X) \;\ll\; I(\mathbf{z}_X; X),
\end{equation}
which implies a fundamental gap in expressive power between $\mathbf{z}_G$ and 
$\mathbf{z}_X$.
\end{proposition}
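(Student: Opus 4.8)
The plan is to bound the two mutual-information terms from opposite directions, exploiting the structural asymmetry that $\mathbf{z}_G$ is computed from $G$ alone while $\mathbf{z}_X$ is computed from $X$ directly, with the data-processing inequality (DPI) as the single rigorous workhorse. Since $\mathbf{z}_G = f_{\theta}^{(G)}(G)$ is a function of the graph only, conditioning on $G$ renders $\mathbf{z}_G$ independent of $X$, so the variables form a Markov chain $X \to G \to \mathbf{z}_G$ and the DPI gives
\[
I(\mathbf{z}_G; X) \le I(G; X) \le \min\{H(G), H(X)\} \le H(G).
\]
The remaining task on this side is to argue that $H(G)$ is genuinely small. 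I would invoke the \emph{small-scale} hypothesis on $r(\mathcal{G})$: for sparse graphs on $n$ nodes with $O(n)$ edges (or a bounded number of structural motifs) the entropy of the adjacency structure scales like $O(n \log n)$ rather than the $\Theta(n^2)$ of a dense ensemble, and crucially it does not grow with the semantic richness of $X$. I would optionally tighten this with a capacity/rate--distortion argument on the GCN: the finite output dimension and limited training signal bound $H(\mathbf{z}_G)$, and since $I(\mathbf{z}_G; X) \le H(\mathbf{z}_G)$ this controls the left-hand side directly.

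For the lower bound on $I(\mathbf{z}_X; X)$, I would use that $\mathbf{z}_X = f_{\phi}^{(X)}(X)$ is (near-)deterministic in $X$, so
\[
I(\mathbf{z}_X; X) = H(X) - H(X \mid \mathbf{z}_X) \approx H(X),
\]
and then appeal to the defining property of large-scale pretraining: the encoder is optimized via contrastive or reconstruction objectives to be information-preserving, forcing $H(X \mid \mathbf{z}_X) \approx 0$. Equivalently, since InfoNCE-type losses are variational lower bounds on mutual information, a trained $f_{\phi}^{(X)}$ certifies $I(\mathbf{z}_X; X) \ge \log N - \mathcal{L}_{\text{NCE}}$ for a large negative-sample count $N$, yielding an explicit large lower bound. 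Chaining the two estimates gives $I(\mathbf{z}_G; X) \le H(G) \ll H(X) \approx I(\mathbf{z}_X; X)$, which is the claim; the ``almost all $x$'' qualifier I would read as the pointwise/typical-set version of these distributional inequalities via the asymptotic equipartition property.

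The hard part will be making ``$\ll$'' non-vacuous rather than merely ``$\le$'' followed by an informal appeal to scale: the DPI step is airtight, but both scale claims---that $H(G)$ is small relative to $H(X)$, and that the pretrained encoder is near-lossless---are assumptions about the data and training regime, not theorems, so I would state them as explicit hypotheses (a sparsity/capacity bound on $(\mathcal{G}, f_{\theta}^{(G)})$; an information-preservation or InfoNCE guarantee on $f_{\phi}^{(X)}$) and verify that together they force a strict, quantified gap. A secondary subtlety is that $G$ and $X$ may be correlated, since they can describe the same entities, so $I(G; X)$ need not vanish; the argument must therefore rest on $H(G)$ being small outright rather than on any independence, and I would use the bound $I(G; X) \le H(G)$ precisely to sidestep this correlation. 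Finally, I would flag that the per-sample phrasing is informal for a mutual-information statement and reinterpret it through the typical set so that the inequality is meaningful sample-wise.
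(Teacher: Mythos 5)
Your proposal is correct and follows the same overall strategy as the paper's proof: bound the graph-side mutual information by a small entropy quantity, and argue the pretrained-side mutual information is large via (near-)determinism of the encoder plus the scale of pretraining data, with the ``$\ll$'' ultimately resting on scale assumptions rather than on a theorem. The one genuine difference is in the key inequality for the graph side. The paper writes $I(\mathbf{z}_G; X) \le I(\mathbf{z}_G; G) + I(G; X)$, bounds $I(\mathbf{z}_G; G) \le \log|\mathcal{D}_G|$ by the small dataset size, and then \emph{separately assumes} that $I(G;X)$ is small because the graph domain is narrow or distinct from $\mathcal{X}$. You instead invoke the Markov chain $X \to G \to \mathbf{z}_G$ (valid since $\mathbf{z}_G$ is a deterministic function of $G$) and apply the data-processing inequality directly: $I(\mathbf{z}_G; X) \le I(G;X) \le H(G)$. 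Your route is tighter and, notably, more rigorous on two counts: first, the paper's additive inequality is not valid for arbitrary random variables (take $\mathbf{z}_G = X$ independent of $G$) and holds here only \emph{because} of the very Markov structure you make explicit; second, your bound $I(G;X)\le H(G)$ subsumes the paper's extra assumption about $G$ and $X$ being nearly unrelated, which is exactly the correlation issue you flag and sidestep. On the pretrained side the two arguments are equivalent up to which marginal the MI is decomposed against: the paper assumes $H(\mathbf{z}_X) \approx \log|\mathcal{D}_X|$, you assume $H(X \mid \mathbf{z}_X) \approx 0$; under a deterministic encoder these say the same thing, though your InfoNCE certificate and the typical-set reading of ``almost all $x$'' are additions the paper does not attempt. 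Your closing caveat --- that the scale claims should be stated as explicit hypotheses rather than derived --- is an accurate diagnosis of what the paper's proof actually does.
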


\begin{proof}

Let $G$ denote graph-structured inputs drawn from a small-scale distribution 
$r(\mathcal{G})$, and let $X$ denote data (text, images, etc.) from a large-scale 
distribution $p(\mathcal{X})$. We use $\mathbf{z}_G = f_{\theta}^{(G)}(G)$ to denote 
the representation learned by a GCN on $\mathcal{G}$, and $\mathbf{z}_X = f_{\phi}^{(X)}(X)$ 
to denote the representation learned by a large-scale pre-trained model on $\mathcal{X}$.

\smallskip

Let $\mathbf{z}_G$ have dimension $d$. By Shannon’s definition of mutual information,
\begin{equation}
I(\mathbf{z}_G; G) \;=\; H(\mathbf{z}_G) \;-\; H(\mathbf{z}_G \mid G).
\end{equation}
For a deterministic encoder $f_{\theta}^{(G)}$, the conditional entropy 
$H(\mathbf{z}_G \mid G)$ is small, so 
\begin{equation}
I(\mathbf{z}_G; G) \;\approx\; H(\mathbf{z}_G).
\end{equation}
Since the dataset $\mathcal{D}_G$ is small, we have 
\begin{equation}
H(\mathbf{z}_G) \;\le\; \log\bigl|\mathcal{D}_G\bigr|.
\end{equation}
Thus,
\begin{equation}
I(\mathbf{z}_G; G) \;\le\; \log\bigl|\mathcal{D}_G\bigr|,
\end{equation}
implying a strict upper bound on the captured information.

\smallskip

Consider $\mathbf{z}_X = f_{\phi}^{(X)}(X)$ obtained via pre-training on 
$\mathcal{X}$. With large data coverage $\bigl|\mathcal{D}_X\bigr|$ and a sufficiently 
expressive model, the entropy $H(\mathbf{z}_X)$ can be much higher:
\begin{equation}
H(\mathbf{z}_X) \;\approx\; \log\bigl|\mathcal{D}_X\bigr|
\quad (\text{where } \bigl|\mathcal{D}_X\bigr|\gg\bigl|\mathcal{D}_G\bigr|).
\end{equation}
Hence,
\begin{equation}
I(\mathbf{z}_X; X) 
\;=\; H(\mathbf{z}_X) \;-\; H(\mathbf{z}_X \mid X)
\;\approx\; \log\bigl|\mathcal{D}_X\bigr|,
\end{equation}
which is significantly larger than $\log\bigl|\mathcal{D}_G\bigr|$ in most practical scenarios.

\smallskip

When aligning $\mathbf{z}_G$ and $\mathbf{z}_X$ in a shared semantic space, note that
\begin{equation}
I(\mathbf{z}_G; X) 
\;\le\; I(\mathbf{z}_G; G) \;+\; I(G; X).
\end{equation}
Since $I(\mathbf{z}_G; G) \le \log\bigl|\mathcal{D}_G\bigr|$ and $I(G; X)$ is typically small 
(the graph domain is narrow or quite distinct from $\mathcal{X}$), it follows that
\begin{equation}
I(\mathbf{z}_G; X)
\;\ll\;
I(\mathbf{z}_X; X).
\end{equation}

The above inequalities establish that representations $\mathbf{z}_G$ learned 
\emph{only} on a small-scale graph dataset $\mathcal{G}$ generally exhibit far 
lower mutual information with broader data $X \in \mathcal{X}$. This accounts 
for the substantial expressiveness gap and difficulty in aligning 
$\mathbf{z}_G$ with $\mathbf{z}_X$ in a common semantic space. 
\end{proof}

\newpage
\section{Graph Modeling}
\label{appendix:graph_modeling}
\paragraph{Defining Multi-modal Data Connectivity.} In real-world scenarios, multimodal data often interact in intricate ways. For instance, in document understanding, integrating both text (e.g., section content or page titles) and images \cite{yasunaga2022retrieval} enables language models to better comprehend the context and improve QA performance. In e-commerce, user interactions with items, represented as text descriptions and images, often reveal implicit correlations that~\cite{wei2024towards}, when effectively modeled, enhance recommendation accuracy. Similarly, in biomedicine, understanding interactions between molecular compositions of proteins and their corresponding visual structures can aid in predicting new protein attributes.
The challenge lies in organizing and preserving these complex relationships between multimodal data, which are often interdependent and hierarchical. To address this, we propose a graph-based framework that captures these connections explicitly. In the following sections, we outline two key approaches for defining and identifying such multimodal relationships:

(a) When multimodal information exhibits intuitive connectivity in its original form, we leverage this natural structure. For instance, in multimodal documents, closer elements (e.g., adjacent paragraphs or text and images within the same section) are more relevant. Accordingly, we construct content graphs based on paragraph relationships, text-image links, and image-caption associations.

(b) When no predefined connectivity exists, relevance is task-specific. For example, in e-commerce platforms, graphs are constructed from user co-click patterns, where frequently co-clicked items indicate higher relevance. Each item is associated with textual descriptions and corresponding images.

\section{Dataset Details}
\label{sec:dataset_detail}
\subsection{WikiWeb2M}
WikiWeb2M~\cite{burns2023wikiweb2m} is a multimodal webpage dataset extending WIT~\cite{srinivasan2021wit} by incorporating full-page structural metadata, section text, images, captions, and hierarchical relationships.
 We adopt the section summarization task, where the model generates a missing first sentence for a section given its text, images, and multimodal context from related sections on the same page, which follows the data processing process in MMGL~\cite{yoon2023multimodal}.
 Each section and the overall page description are represented as nodes, with text and images as attributes. Edges capture the relationships between consecutive sections, links between sections and their images and captions, caption-to-section connections, and links between page descriptions and each section.
Due to storage constraints, we randomly sample 10K Wikipedia pages, resulting in 13,539 section summary samples for training and 1,768 for testing.
\subsection{Ele-Fashion}
Ele-Fashion~\cite{zhu2024multimodal} is a multimodal node classification dataset constructed from Amazon-Fashion~\cite{ni2019justifying}, designed for evaluating graph-aware multimodal learning.
 Each node represents a product, with textual and visual attributes derived from product titles and images. Edges encode co-purchase relationships between products. The classification task aims to predict product categories using both multimodal attributes and graph structure.
 Ele-Fashion consists of 11 product categories, 97,766 nodes, and 199,602 edges, with an average node degree of 4.08 and an edge homophily score of 0.7675. Each node has a textual description, and most nodes are also associated with an image. For the experiments with OPT-125M backbone, we held out 5 unseen classes for evaluation: \{3,4,7,9,11\}, and due to the strong reasoning ability of LLaMA-1B, we enhance the task difficulty by holding out 9 unseen classes for evaluation: \{0, 9, 6, 10, 1, 2, 3, 11, 7\}.

\section{Detailed Baselines}
\label{sec:baselines}
We compare various transformer-based pretrained language models (PLMs), vision-language models (VLMs), and MMAG learning variants from MMGL~\cite{yoon2023multimodal}. 

For PLMs, we evaluate two backbones: (a) OPT-125M~\cite{zhang2022opt}, as used in MMGL; (b) LLaMA-1B~\cite{touvron2023llama}, a 1B-parameter model from the latest LLaMA 3.2 family, chosen as a computationally feasible representative of large language models. 

For VLMs, we compare (a) BLIP2-OPT-2.7B~\cite{li2023blip}, which employs query-based vision-language alignment but lacks structured multimodal context modeling, and (b) Qwen2-VL-7B-Instruct~\cite{wang2024qwen2}, a state-of-the-art open-source VLM supporting multi-image input.

For each PLM backbone, we evaluate:
\begin{itemize}
    \item \textsc{Node's Text (T)}: Input single-node text attributes to a frozen PLM. 
\item  \textsc{Subgraph's T}: Input text attributes of node itself and of other nodes in \(\mathcal{G}_t\). 
\end{itemize}

For VLM backbones, we compare inference performance on the same two settings as PLM. Fine-tuned VLMs are excluded due to the significant scale difference from LLMs.

For MMGL\cite{yoon2023multimodal}, we evaluate its various proposed settings:
\begin{itemize}
\item  \textsc{Node's Text}: Fine-tune the PLM using single-node text attributes with parameter efficient tuning (PEFT), including prefix tuning~\cite{li2021prefix} or lora~\cite{hu2021lora}. 
\item \textsc{Subgraph's Text}: Fine-tune the PLM with text attributes of subgraph nodes. 
\item  \textsc{Node's Text \& Image (I)}: Input text and image attributes of a single node. Visual features are mapped to tokens via a frozen encoder and linear projection. The PLM is fine-tuned with PEFT. 
\item  \textsc{Subgraph's Text \& Image}: Extend Node's T \& I to include text and image attributes of neighboring nodes in \(\mathcal{G}_t\) and \(\mathcal{G}_p\). 
\item  \textsc{Subgraph's T\& I + GNN}: Map text and visual features to embeddings, process \(\mathcal{G}\) with a GCN, and add graph embeddings to the PLM's token embeddings.

\end{itemize}

\section{Related Work about Multi-Modal Knowledge Graphs}
\label{sec:more_related_work}
Multimodal Knowledge Graphs (MMKG) integrate structured relational triples~\cite{DBLP:conf/kdd/LiFAH25, li2023metadata, zeng2023parrot} in graph learning~\cite{DBLP:conf/sigir/FuH21, DBLP:conf/kdd/FuFMTH22, xu2023neural, zeng2023generative, zheng2024pyg, li2024can, wei2022augmentations,zhu2024impact, li2023size, DBLP:conf/iclr/FuHXFZSWMHL24, DBLP:journals/corr/abs-2410-02296, DBLP:conf/iclr/WangHZFYCHWYL25, li2025modelfreegraphdataselection} with auxiliary modalities such as images, text, or numerical attributes~\cite{chen2024knowledge}.
Early efforts focused on MMKG dataset construction~\cite{zhu2022multi}, enriching entities in traditional KGs (e.g., Freebase, DBpedia~\cite{ferrada2017imgpedia}) with visual features to support tasks like link prediction~\cite{DBLP:conf/nips/BanZLQFKTH24, tieu2025invariant} or entity matching~\cite{DBLP:conf/www/LiFH23}.
Recent work expands to multimodal knowledge graph completion~\cite{chen2022hybrid, xu2022relation}, cross-modal reasoning~\cite{zheng2023mmkgr, zhang2022multimodal}, and adaptive entity representation via modality-aware fusion~\cite{zhang2024mixture}. For example, MOMOK~\cite{zhang2024mixture} proposes a mixture-of-experts model~\cite{ai2023mlp} that dynamically attends to different modalities during embedding learning, while MarT~\cite{zhang2022multimodal} tackles analogical reasoning by aligning multimodal relational patterns. These methods largely operate at the entity or triple level, using contrastive or attention-based fusion~\cite{xu2024towards} to enhance representation learning, thus lacking the capacity to support both generative~\cite{xu2024slmrec, zhang2025law, wei2024towards, ning20242} and discriminative tasks simultaneously.

\section{Implementation Details}
\label{appendix:implementation_details}
\textbf{Subgraph Extraction.} Different strategies are used based on graph density. In WikiWeb2M, where edges and images are sparse, we follow MMGL’s approach, selecting 10 textual neighbors and 4 image neighbors, along with the target node’s attributes. In contrast, Ele-Fashion has a denser structure with abundant edges. To balance structure retention and complexity, we keep all first-hop neighbors and randomly sample two second-hop neighbors, capping textual and visual subgraphs at 11 nodes (target + 10 neighbors).  

\textbf{PLM Input Length.} For single-node input, we set 512 tokens for generative tasks and 128 tokens for discriminative tasks. When including subgraph context, limits increase to 1024 tokens (generative) and 512 tokens (discriminative) to accommodate richer information.  

\textbf{Parameter-Efficient Tuning.} We apply Prefix-Tuning for OPT-125M and LoRA for LLaMA-1B, optimizing adaptability while ensuring efficient updates. The hyper-parameters of PEFT, including the number of prefix tokens (20) and the row-rank(64) of lora, keeps the same for all baselines and our method for fair comparison. The vision encoder is CLIP. All experiments were conducted on computing nodes equipped with 2 NVIDIA A100 or 2 NVIDIA Ada A6000 GPUs. 

\textbf{Discriminative Task Setup.} The classification task involves 11 classes with different zero-shot settings per backbone. For OPT-125M, we train on 6 classes and infer zero-shot on 5. For LLaMA-1B, given its superior generalization, we train on only 2 classes and evaluate on the remaining 9, testing its zero-shot adaptability. We match the generated response to the closest class label using SequenceMatcher, a fuzzy string matching algorithm that measures textual similarity. The vision encoder is CLIP. All experiments were conducted on computing nodes equipped with 2 NVIDIA A100 or 2 NVIDIA Ada A6000 GPUs.

\textbf{Hyper-parameter Setting.} The key hyperparameter setting is shown in Table~\ref{tab:HP}.

\begin{table}[htp!]
    \centering
    \label{tab:HP}
    \caption{Hyperparameter settings for generative and discriminative tasks.}
    \label{tab:hyperparams}
    \resizebox{0.55\textwidth}{!}{%
    \begin{tabular}{lcc}
        \toprule
        \textbf{Hyperparameter} & \textbf{Generative} & \textbf{Discriminative} \\
        \midrule
        Learning Rate & $1 \times 10^{-4}$ & $1 \times 10^{-4}$ \\
        Max Input Length & 1024 & 512 \\
        Max Output Length & 128 & 32 \\
        Text Neighbor Sampling & 11 & 11 \\
        Image Neighbor Sampling & 5 & 11 \\
        Batch Size (per device) & 2 & 2 \\
        Gradient Accumulation Steps & 16 & 16 \\
        Visual Tokens per Image & 4 & 4 \\
        LoRA Rank & 64 & 64 \\
        Prefix Tuning Virtual Tokens & 20 & 20 \\
        Attention Diffusion Steps & 2 & 2 \\
        Number of MM-QFormer Block & 1 & 1 \\
        Attention Diffusion $\alpha$ & 0.1 & 0.1 \\
        Number of Attention Heads & 8 & 8 \\
        Training Epochs (OPT-125M) & 50 & 5 \\
        Training Epochs (LLaMA-1B) & 3 & 3 \\
        \bottomrule
    \end{tabular}%
    }
\end{table}

\section{Modeling Graph as a Standalone Modality on Discriminative Setting}
\label{appendix:graph_standalone}
To further evaluate the role of graph as an independent modality, we conduct additional experiments on node classification using the OPT-125M backbone. Following the generative protocol described in Section~\ref{sec:discussion}, we feed GCN-derived node embeddings into the LLM as soft prompts, in a manner consistent with visual and textual tokens. This allows a fair comparison between graph-only modeling and our proposed fusion strategy. 

\begin{table}[h]
\centering
\caption{Performance of modeling graph as a standalone modality in the discriminative setting.}
\begin{tabular}{lcccc}
\toprule
\textbf{Method} & \textbf{ROUGE-L} & \textbf{Accuracy} & \textbf{Recall} & \textbf{Precision} \\
\midrule
Subgraph's T\&I & 0.8144 & 99.85 & 83.25 & 83.33 \\
+Graph Token    & 0.6648 & 99.96 & 89.91 & 89.98 \\
+Node Token     & 0.6892 & 99.53 & 88.78 & 89.98 \\
Ours (Hop-Diffused) & \textbf{0.8282} & \textbf{100.00} & \textbf{100.00} & \textbf{100.00} \\
\bottomrule
\end{tabular}
\end{table}

These results reinforce our finding that directly modeling the graph as a separate modality brings limited gains. This may be due to semantic mismatches between GNN-derived embeddings and pretrained vision-language features. Our Hop-Diffused approach significantly outperforms the others, providing further evidence for the advantage of structure-aware generation as stated in Proposition~\ref{prop:info_gap}.

\section{Impact of the Graph Density}
\label{appendix:graph_density}

We analyze the impact of graph density under the generative setting by varying the number of (text, vision) neighbors from sparse (5,2) to dense (15,8). Table~\ref{tab:graph_density} shows that our model exhibits stable performance across varying densities and benefits from richer structural context, with consistent improvements in generation metrics.

\begin{table}[h]
\centering
\caption{Impact of graph density on Graph4MM in the generative setting with OPT-125M backbone.}
\label{tab:graph_density}
\begin{tabular}{lccc}
\toprule
\textbf{Density} & \textbf{BLEU-4} & \textbf{ROUGE-L} & \textbf{CIDEr} \\
\midrule
Sparse (5,2) & 0.0788 & 0.4049 & 0.7795 \\
Medium (11,5) & 0.0800 & 0.4076 & 0.7831 \\
Dense (15,8) & \textbf{0.0809} & \textbf{0.4086} & \textbf{0.8018} \\
\bottomrule
\end{tabular}
\end{table}

\section{Simulation Study on Attention Diffusion vs GAT under Small $K$}
\label{appendix:simulation_DE}
To examine over-smoothing at small diffusion depths, we conduct a simulation study using the Cora dataset. We measure the Dirichlet energy of node embeddings from both Hop-Diffused Attention and GAT over $K = 0$ to $4$. As shown in Table~\ref{tab:dirichlet_sim}, Hop-Diffused Attention preserves energy better, especially at small $K$, justifying our default choice of $K=2$.

\begin{table}[h]
\centering
\caption{Dirichlet energy comparison across diffusion layers.}
\label{tab:dirichlet_sim}
\begin{tabular}{lccccc}
\toprule
\textbf{Method} & $K=0$ & $K=1$ & $K=2$ & $K=3$ & $K=4$ \\
\midrule
Hop-Diffused Attention & 3.2445 & 3.0105 & 2.9504 & 2.8360 & 2.7843 \\
GAT & 3.2445 & 0.9775 & 1.1345 & 0.6129 & 0.5448 \\
\bottomrule
\end{tabular}
\end{table}

\section{Performance Comparison under Different Random Seeds}
\label{appendix:random_seeds}
We evaluate model stability under three random seeds in the generative setting. Table~\ref{tab:seed_stability} compares Graph4MM with MMGL and reports mean ± standard deviation for each metric.

\begin{table}[h]
\centering
\caption{Performance under different random seeds in the generative setting.}
\label{tab:seed_stability}
\begin{tabular}{lccc}
\toprule
\textbf{Method} & \textbf{BLEU-4} & \textbf{ROUGE-L} & \textbf{CIDEr} \\
\midrule
Graph4MM & \textbf{0.07991} ± 0.00066 & \textbf{0.40725} ± 0.00028 & \textbf{0.78904} ± 0.00305 \\
MMGL     & 0.07762 ± 0.00059 & 0.40504 ± 0.00076 & 0.76907 ± 0.00286 \\
\bottomrule
\end{tabular}
\end{table}

\section{Performance Comparison with Link Prediction Setting}
\label{appendix:link_prediction}
To further assess generalization, we introduce a new link prediction task on the Amazon-Sports dataset from MM-Graph Benchmark~\cite{zhu2024multimodal}. We sample 10k positive and negative node pairs and use an 8:1:1 train/val/test split. The OPT-125M backbone and hyperparameters follow our node classification setup. Table~\ref{tab:link_prediction} shows that Graph4MM significantly outperforms all baselines, with an average gain of 7.7\% over the second-best method.

\begin{table}[h!]
\centering
\caption{Link prediction performance on the Amazon-Sports dataset with OPT-125M backbone.}
\label{tab:link_prediction}
\resizebox{0.75\linewidth}{!}{%
\begin{tabular}{lcccc}
\toprule
\textbf{Method} & \textbf{ROUGE-L} & \textbf{Accuracy} & \textbf{Recall} & \textbf{Precision} \\
\midrule
(PLM) Node's Text                      & 0.1563  & 52.35  & 51.87  & 72.67 \\
(PLM) Node's Subgraph                  & 0.2871  & 56.81  & 56.26  & 74.56 \\
(MMGL) Node's Text \& Image            & 0.5603  & 93.92  & 93.86  & 95.51 \\
(MMGL) Subgraph's Text \& Image        & 0.7352  & 95.46  & 95.44  & 94.35 \\
(Graph4MM) Hop-Diffused MM-QFormer     & \textbf{0.8904} & \textbf{99.98} & \textbf{99.97} & \textbf{99.86} \\
\bottomrule
\end{tabular}%
}
\end{table}

\newpage
\section{Validation Accuracy Curve of OPT-125M During Training Discriminative Task}
\label{appendix:accuracy_curve}
\begin{figure}[h]
    \centering
    \includegraphics[width=0.7\textwidth]{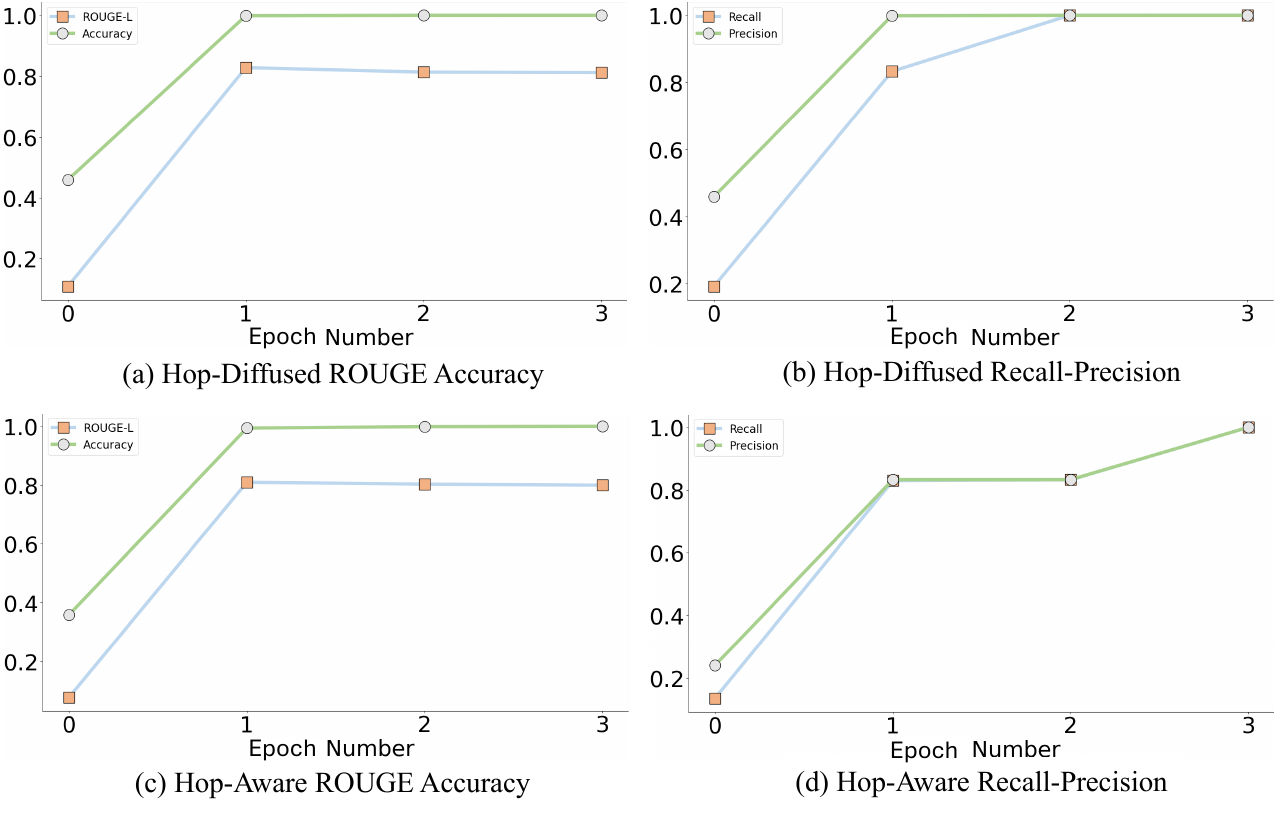}
    \caption{Validation performance across epochs in the discriminative task for Hop-Diffused and Hop-Aware methods on OPT-125M.}
    \label{fig:hop_diffused_rouge_acc}
\end{figure}

\section{Accuracy Evolution in Zero-Shot Node Classification with Different Number of Unseen Classes}
\label{appendix:accuracy_evolution}
\begin{figure}[h]
    \centering
    \includegraphics[width=0.65\textwidth]{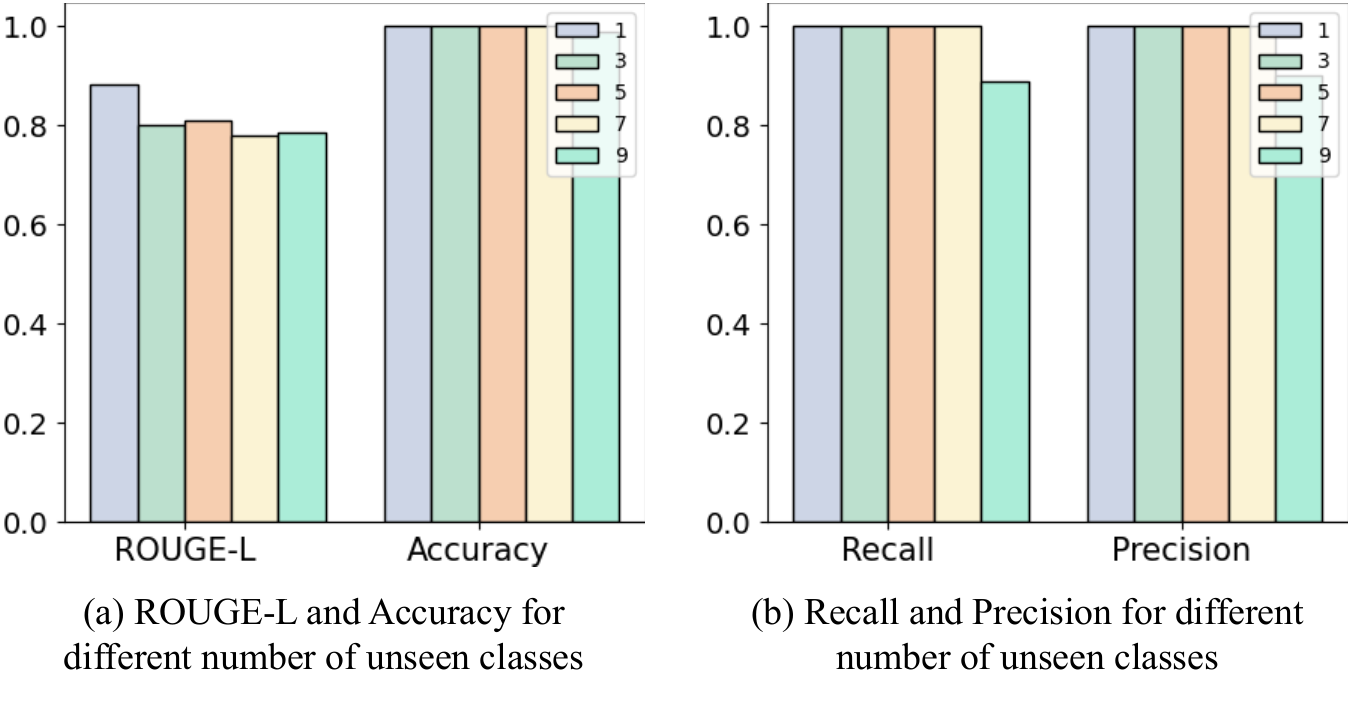}
    \caption{\centering The performance in discriminative task with OPT-125M backbone across different number of unseen classes (1, 3, 5, 7, 9).}
    \label{fig:barchart}
\end{figure}

\newpage
\section{Example Prompt and Output}
\label{appendix:prompt}
\subsection{Example Prompt and Output in Generative Setting on WikiWeb2M Dataset}
\begin{tcolorbox}[colback=gray!10, colframe=black!75, sharp corners]
\textcolor{blue}{\textbf{Task:}} Summarize the given section content.\\

\textcolor{red}{Section Content:} \textcolor{black}{In 1682, he published a sermon titled "By what means can ministers best win souls?" and in 1692, a letter to a minister in the country—supposed to be his eldest brother, William (1640), minister of Borthwick, Midlothian—entitled "A Vindication of the Protestant Doctrine concerning Justification and of its Preachers and Professors from the unjust Charge of Antinomianism." This "angry letter," as Dr. Calamy calls it, was occasioned by the violent controversy that broke out among the dissenting ministers of London after the republication in 1690 of the works of Dr. Tobias Crisp. Donald Macleod called it "unrivalled." Charges of Antinomianism were made on one side and of Arminianism on the other, and Traill was distinguished for his zeal against Arminianism. A somewhat similar controversy, known as the Marrow Controversy, followed in Scotland, and as Boston of Ettrick and others took the same side as Traill, his works became very popular among them and their adherents. \textit{He later published} "Sermons on the Throne of Grace from Heb. iv. 16" (3rd edit. 1731) and "Sermons on the Prayer of Our Saviour, John xvii. 24." These works were devout, plain, and edifying, and were highly favored by those who were attached to evangelical religion.}\\
\textcolor{red}{Context Information:}\\
\quad \textcolor{red}{Section Image Caption:} \textcolor{black}{Rev Robert Traill's New Testament (1656)}\\
\quad \textcolor{red}{1-Hop Neighbor Section Context:} \textcolor{black}{Robert Traill's early education was carefully superintended by his father, and at the University of Edinburgh he distinguished himself in both literary and theological studies. At the age of nineteen, he stood beside James Guthrie on the scaffold. He was later associated with John Welsh, minister of Irongray, known for holding armed conventicles. In 1666, he and his family were forced into hiding after a controversial book was found in their home. In 1667, he was denounced as a 'Pentland rebel' and fled to Holland, joining his exiled father and other Scottish refugees...}\\
\quad \textcolor{red}{2-Hop Neighbor Section Context:} \textcolor{black}{Robert Traill was a church minister at Cranbrook in Kent. \textit{He was born at Elie in Fife in 1642}. He was incarcerated on the Bass Rock, an island in the Firth of Forth, from July 19, 1677, to October 5, 1677. \textit{His work} was often quoted by J. C. Ryle and is still published in the \textit{21st century}.}\\
(The remaining contexts are omitted for brevity.) \\

\textcolor{blue}{\textbf{Expected Output:}}\\
\textcolor{black}{His first short publication did not occur until he was \textit{forty years old}, and the next did not appear until he was \textit{fifty}.}\\
\textcolor{blue}{\textbf{\name \ Output:}}\\
\textcolor{black}{The first work work was not appear until was thirty years old, it second publication not appear until he was fifty.}\\
\end{tcolorbox}

\begin{figure}[ht]
    \centering
    \includegraphics[width=0.65\textwidth]{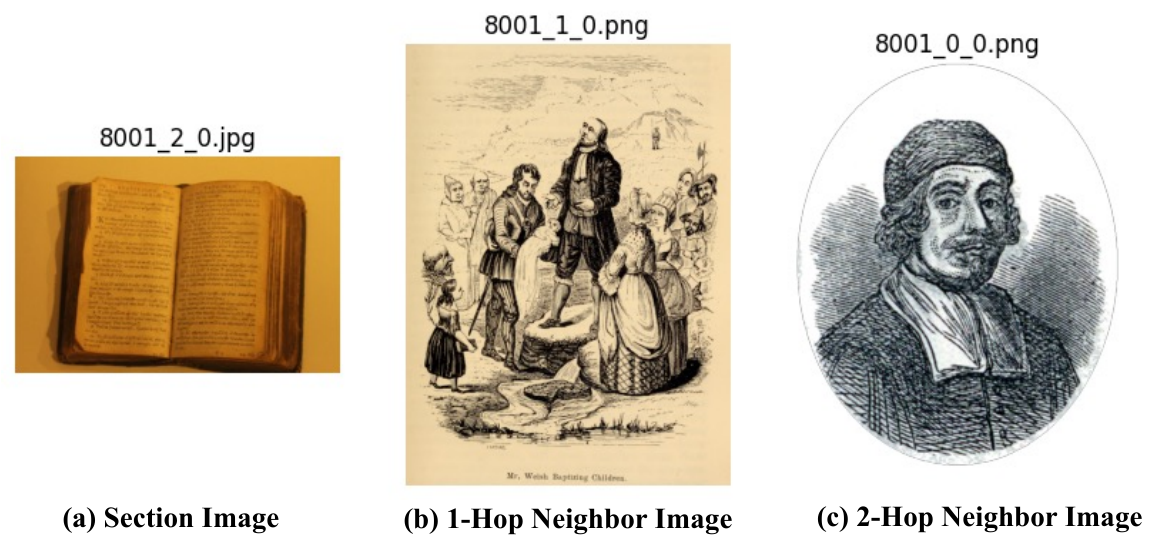} 
    \caption{Images from multi-hop neighboring sections within a webpage.}
    \label{fig:multi_hop_images}
\end{figure}

\subsection{Example Prompt and Output in Discriminative Setting on Ele-Fashion Dataset}

\begin{tcolorbox}[colback=gray!10, colframe=black!75, sharp corners]
\textcolor{blue}{\textbf{Task:}} Classify the given item.\\

\textcolor{red}{Node Description:} \textcolor{black}{Speedo Men's Sonic Warmup Jacket}\\
\textcolor{red}{Available Classes:} \\
\quad 0: Sneakers and Men's Formal Shoes\\
\quad 1: Lingerie, Costumes, and Women's Footwear\\
\quad 2: Jewelry and Accessories\\
\quad 3: Stockings and Watches\\
\quad 4: Graphic T-Shirts and Sweatshirts\\
\quad 6: Scarves, Suspenders, and Wallets\\
\quad 7: Undergarments and Socks\\
\quad 8: Basic T-Shirts\\
\quad 9: Men's Casual and Formal Shirts\\
\quad 10: Dresses\\
\quad 11: Shoes and Boots\\
\textcolor{red}{Neighbor Information:}\\
\quad \textcolor{black}{\textbf{1-Hop Neighbor:} Speedo Women's Female Sonic Warm-Up Jacket}\\
\quad \textcolor{black}{\textbf{2-Hop Neighbor:} Speedo Women's Female Sonic Warm-Up Pant}\\
\quad \textcolor{black}{\textbf{2-Hop Neighbor:} Speedo Men's Sonic Warmup Jacket}\\

\textcolor{blue}{\textbf{Expected Output:}}\\
\textcolor{black}{4:Graphic T-Shirts and Sweatshirts}\\
\textcolor{blue}{\textbf{\name \ Output:}}\\
\textcolor{black}{4:Graphic T-Shirts and Sweatshirts.}\\
\end{tcolorbox}

\begin{figure}[ht]
    \centering
    \includegraphics[width=0.7\textwidth]{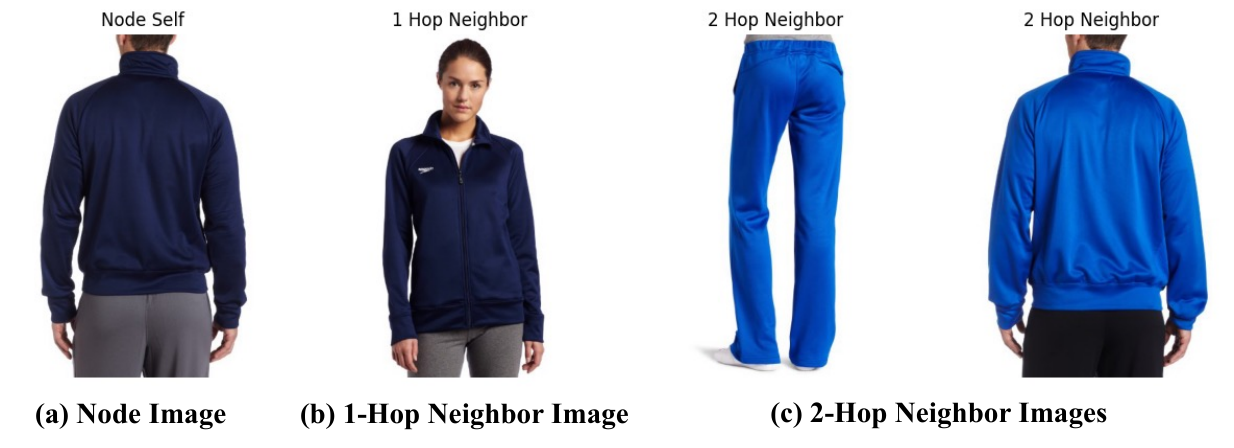} 
    \caption{Images from multi-hop neighboring nodes.}
    \label{fig:multi_hop_images_2}
\end{figure}


\end{document}